\newcommand{\Om}{\Omega}
\def\1{\bm{1}}
\DeclareMathAlphabet{\mathsfit}{\encodingdefault}{\sfdefault}{m}{sl}
\SetMathAlphabet{\mathsfit}{bold}{\encodingdefault}{\sfdefault}{bx}{n}
\newcommand{\sigmoid}{\sigma}
\DeclareMathOperator*{\argmax}{arg\,max}
\newtheorem{theorem}{Theorem}
\newtheorem{lemma}{Lemma}
\newtheorem{proposition}{Proposition}
\newtheorem{remark}{Remark}
\pgfplotsset{compat=1.10}
       \tikzset{
        hatch distance/.store in=\hatchdistance,
        hatch distance=10pt,
        hatch thickness/.store in=\hatchthickness,
        hatch thickness=2pt,
        hatch color/.store in=\hatchcolor,      % <-- added
        hatch color=black,                      % <-- added
    }
\pgfqpoint{\hatchdistance}{\hatchdistance}}
\def\argmax{\operatornamewithlimits{arg\,max}}
\definecolor{darkgreen}{rgb}{0.0, 0.5, 0.0}
\definecolor{darkred}{rgb}{0.5, 0.0, 0.0}
\definecolor{dorkgreen}{rgb}{0.0, 0.8, 0.0}
\newtheorem{assumption}{Assumption}
\DeclareMathOperator{\logit}{logit}
\definecolor{TakeawayAccent}{HTML}{1F77B4} % muted blue
\newtcolorbox{takeaway}[1][]{
  takeaway,
  title={#1},
  fonttitle=\bfseries,
%  attach boxed title to top left={yshift=-2mm, xshift=2mm},
%  boxed title style=takeawaytitle
}
\newenvironment{lessons}{%
  \begin{itemize}[leftmargin=*, topsep=2pt, itemsep=2pt]
}{%
  \end{itemize}
}
\newtcolorbox{assbox}{
  colback=black!5!white,
  colframe=black!75!black,
  width=\textwidth,
}
\newtcolorbox{thmbox}{colback=blue!5!white,colframe=black!75!black}
\providecommand\paragraph{\@startsection{paragraph}{4}{\z@}%
  {3.25ex \@plus1ex \@minus.2ex}{-1em}%
  {\normalfont\normalsize\bfseries}}
\providecommand\subparagraph{\@startsection{subparagraph}{5}{\z@}%
  {3.25ex \@plus1ex \@minus.2ex}{-1em}%
  {\normalfont\normalsize\bfseries}}
\title[Reinforcement Learning with Verifiable Rewards]{Reinforcement Learning with Verifiable Rewards:\\ GRPO's Effective Loss, Dynamics, and Success Amplification}
\author[Y. Mroueh]{Youssef Mroueh}
\address[Y.Mroueh]{IBM Research}
\definecolor{darkgreen}{rgb}{0.0, 0.5, 0.0}
\definecolor{darkred}{rgb}{0.5, 0.0, 0.0}
\definecolor{dorkgreen}{rgb}{0.0, 0.8, 0.0}
\begin{document}

\maketitle

\begin{abstract}
Group Relative Policy Optimization (GRPO) was introduced in \citep{shao2024deepseekmath} and used to train DeepSeek\textendash R1 \citep{guo2025deepseekR1} for promoting reasoning in LLMs under verifiable (binary) rewards. We show that the mean{+}variance calibration of these rewards induces a contrastive loss in which the contrastive samples are synthetic data drawn from the previous policy. While GRPO was originally paired with clipping to keep updates near the old policy, we analyze variants that differ in reward normalization (mean-only vs.\ mean{+}variance) and in how they regularize updates using KL divergence: either penalizing divergence from the previous model (\emph{mirror}), penalizing divergence from a fixed reference model $\pi_{\mathrm{ref}}$, or combining both forms of regularization. For each, the optimal policy $\pi_n$ admits an explicit form in terms of the binary reward and the first and second  order statistics of the reward under $\pi_{n-1}$, as well as the policies $\pi_{n-1}$ and $\pi_{\mathrm{ref}}$. Iterating results in a sequence  $\{\pi_n\}$ whose \emph{probability of success (PoS)} obeys a simple recurrence that converges to a fixed point determined by the reference PoS and the regularization strength. We further show that this fixed point exceeds the reference, demonstrating that GRPO amplifies the policy's probability of success.
\end{abstract}

%\stoptocentries

%\section{Introduction }

\section{Introduction}
%\youssef{change abstract, contributions and conlusions}

In Reinforcement Learning (RL), a policy is learned by maximizing a reward that encodes constraints or an objective we want the policy to conform to or achieve. Policy gradient methods and actor-critic methods \citep{sutton1998reinforcement}, enable RL-based training of parametric policies, including Large Language Models (LLMs), particularly when dealing with non-differentiable rewards. Unlike supervised learning or preference optimization, which require labeled training data, reinforcement learning generates \emph{synthetic data} sampled online from the learned policy as training progresses.

Proximal Policy Optimization (PPO), introduced in \citep{schulman2017proximal}, is a widely used algorithm that facilitates such training. PPO relies on importance sampling from the model’s previous (``old'') policy while ensuring that updates remain within a certain proximity to the old policy. Policy gradient methods are known for their high variance, and PPO mitigates this by learning a critic that reduces the variance of gradient estimates. The critic normalizes the reward, and PPO's advantage function—defined as the difference between the reward and the critic’s evaluation—drives the optimization process.

Group Relative Policy Optimization (GRPO) was recently introduced in DeepSeekMath \citep{shao2024deepseekmath}. GRPO closely follows PPO’s optimization framework but differs in how the advantage is computed. Specifically, GRPO estimates the advantage using Monte Carlo rollouts rather than a learned critic. Additionally, GRPO applies whitening to the advantage function, meaning it standardizes the reward’s mean and variance. These statistics are estimated from a ``group'' of Monte Carlo rollouts corresponding to  samples from the LLM policy conditioned on a single input or query to the policy. Whitening the advantage function has been recognized in many PPO implementations as an important ingredient for training stability \citep{Engstrom2020Implementation,huang2024the}. %The main novelty in GRPO lies in computing this whitening using Monte Carlo rollouts conditioned on a  single input or prompt in the context of LLM training.

GRPO therefore eliminates the need for training a separate critic network alongside the LLM policy, instead leveraging efficient sampling from the LLM’s policy. This is made feasible by optimized model serving through VLLM \citep{kwon2023efficient}. GRPO has been employed in the DeepSeek model series, including DeepSeek-v3 \citep{liu2024deepseekv3} and DeepSeek-R1 \citep{guo2025deepseekR1}. DeepSeek-R1 unlocked reasoning capabilities in open-source models, and its success can be attributed to several factors and innovations, among them:  (1) A strong pre-trained model (DeepSeek-v3), (2) The reasoning chain of thoughts \texttt{<think>...<think> <answer>...<answer>}
and (3) The use of verifiable binary rewards with GRPO to fine-tune the models on reasoning and math tasks.

We focus in this paper on Reinforcement Learning with Verifiable Rewards (RLVR) using GRPO, as recently termed by \citet{lambert2024t}. Verifiable rewards for RL with LLMs typically include (i) correctness checks via string matching to a gold answer when available or via an LLM-as-judge otherwise \citep{guo2025deepseekR1,openr1,deepscaler2025,guan2025deliberativealignmentreasoningenables}. 
Additionally, (ii) execution-based pass/fail in code generation (interpreters and unit tests) and (iii) simple binary checks for formatting/refusals provide scalable 0/1 signals for training \citep{openr1,guo2025deepseekR1,lambert2024t}.
Verifiable reward balance simplicity and  bias and are thought to be less prone to reward hacking  than reward models learned from preference data. %Reward hacking is a common issue in reinforcement learning where the policy learns to over-optimize a reward leading to a lower quality of the model \citep{gao2023scaling}. While verifiable rewards are more resilient to reward hacking,  \citet{lambert2024t} showed that for low regularization of the $\mathsf{KL}$ constraint to the reference model, reward hacking  occurs when using verifiable constraints. Hence we study in this paper $\mathsf{KL}$-regularized Reinforcement Learning with Verifiable Rewards using GRPO. 
We note that a recent paper \citep{vojnovic2025alignmentobjectivegrpo} studies GRPO with a focus on the policy obtained using an approximation of the $\mathsf{KL}$  divergence used in practical implementations.

The original GRPO's practical recipe  \citep{shao2024deepseekmath} combines PPO-style clipping with an explicit KL regularizer to a frozen reference model.  On the other hand, mirror-descent style updates that regularize to the \emph{previous} iterate (rather than a fixed reference) have been studied under the Mirror Descent Policy Optimization (MDPO) framework, which interprets each step as approximately solving a trust-region problem via a Bregman (KL) proximity term to $\pi_{n-1}$ (see for example \citep{pmlr-v37-schulman15,tomar2021mirrordescentpolicyoptimization,gunter2024appleintelligencefoundationlanguage}). “Dr.\ GRPO” \citep{liu2025understanding} is a  variant that removes variance normalization (i.e., uses mean-only normalization of group rewards), simplifying the scaling while keeping the same overall training loop. Finally, recent large-scale systems such as DAPO \citep{yu2025dapoopensourcellmreinforcement}  report strong results when removing the reference-model KL entirely (i.e., training reference-free), alongside additional engineering choices such as decoupled clipping and dynamic sampling.% designed for long chain-of-thought RL.

Our main contributions are:
\begin{enumerate}
\item \textbf{Contrastive Loss (Sec.~\ref{sec:CL}).}
We show that GRPO with calibrated verifiable rewards is equivalent to an \emph{adaptive, weighted contrastive loss} evaluated on samples from the previous policy.

\item \textbf{Policy Recursions.}
Leveraging this equivalence, we derive, for multiple GRPO variants, a closed-form recursion for the optimal policy as a function of $\pi_{\mathrm{ref}}$, $\pi_{n-1}$, and the previous policy’s probability of success (PoS) $p_{n-1}$.
Section~\ref{sec:Recursion} analyzes GRPO (no clipping) with a KL penalty to the reference; Section~\ref{Sec:MirrorGRPO} studies \emph{Mirror GRPO} with a KL penalty to the previous iterate only; Appendix~\ref{sec:two-kl} covers the mixed (two-KL) case i.e mixed KL penalties to reference and previous iteration; and Section~\ref{sec:meanonly} treats the mean-only normalization.

\item \textbf{PoS Dynamics \& Fixed-Point Amplification.}
We prove that the induced PoS sequence $(p_n)$ satisfies a recursion admitting a fixed point $p^*$ and, under mild assumptions, $p_n \to p^*$ with $p^* \ge p_{\mathrm{ref}}$, establishing \emph{success amplification} for GRPO. The stepwise monotonicity of $(p_n)$ depends on the specific variant. The dynamic of the PoS is verified empirically in Appendix \ref{Sec:Exp}. Code is provided in supplementary material. 
\end{enumerate}

\section{ GRPO With verifiable Rewards as an Adaptive Weighted Contrastive Loss } \label{sec:CL}

Let $\rho_{\mathcal{Q}}$ be a distribution of prompts or questions, and let $r$ be a reward function that evaluates the output $o \in \mathcal{O}$ of a  policy. As discussed in the introduction, we restrict our analysis to verifiable rewards, meaning binary rewards, $r: \mathcal{Q}\times \mathcal{O} \to \{0,1\}$.   Given a prompt $q \sim \rho_{Q}$, let $\pi_{\theta} (o|q)$ be the policy of an LLM, where $o$ represents the sequence outcome and  $\theta \in \Theta$ the parameters of the model. $\pi_{\theta_{\text{old}}}$ denotes the ``old'' policy or the policy from a previous iteration. $\pi_{\mathrm{ref}}$ corresponds to the reference policy, and $\mathsf{KL}$ is the  Kullback–Leibler divergence :
\[ \mathsf{KL} ( \pi || \pi_{\mathrm{ref}}) = \mathbb{E}_{q\sim\rho_{\mathcal{Q}} }\mathbb{E}_{o\sim \pi (.|q)} \log\left(\frac{ \pi (o|q)}{ \pi_{\text{ref}}(o|q)}\right) \]
We note the mean and variance of the reward under a policy $\nu$ as follows: 
\[ \mu_{\nu}(q)=\mathbb{E}_{o' \sim \nu (.|q)} r(q,o') \quad \sigma^2_{\nu}(q)= \mathsf{Var}_ {o' \sim \nu(.|q)} r(q,o').  \]

\noindent For a regularization parameter $\beta>0$, we  start by recalling  GRPO's optimization problem \citep{shao2024deepseekmath}: 
\begin{equation}
\max_{\theta} \mathbb{E}_{q\sim \rho_{\mathcal{Q}}} \mathbb{E}_{o\sim \pi_{\theta_{\text{old}}} (. |q) } f_{\epsilon}\left( \frac{\pi_{\theta} (o| q )}{\pi_{\theta_{\text{old}}}(o| q) }, A_{\pi_{\theta_{\text{old}}}}(q,o)  \right) - \beta \mathsf{KL} (\pi_{\theta} || \pi_{\mathrm{ref}}) 
\label{eq:GRPO}
\tag{GRPO-Clip}
\end{equation}
where the ``advantage" for an outcome $o$, $A(q,o)$ is given by the whitened reward:
\begin{equation}
A_{\pi_{\theta_{\text{old}}}}(q,o) = \frac{  r (q,o)-  \mu_{\pi_{\theta_{\text{old}}}}(q)  }{ \sigma_{\pi_{\theta_{\text{old}}}}(q)},
\label{eq:adv}
\end{equation}
and for $\epsilon \in [0,1]$, the clipping function
$f_{\epsilon}$  is given by  $ f_{\epsilon}(x,y)= \min(x y, \text{clip}(x,1-\epsilon, 1+ \epsilon) y ).$

\noindent We see that GRPO optimizes the whitened reward (referred to as advantage, $A(q,o)$, in \citep{shao2024deepseekmath}) using importance sampling from the ``old'' policy while maintaining the optimized policy close to  $\pi_{\mathrm{ref}}$ as measured by the $\mathsf{KL}$ divergence. The clipping used in \eqref{eq:GRPO} ensures that the likelihood ratio between the policy and the old policy is maintained within  $[1-\epsilon,1+ \epsilon]$.

\subsection{Whitening the Rewards in GRPO As means of  Calibration }

Recall that our reward $r$ is a verifiable reward that evaluates correctness of a reasoning or code  execution, so $r(q,o) \in \{0,1\}$. 
We note the probability of success of the old  policy $\pi_{\text{old}}$:
\begin{equation}
p(q) =  p_{\theta_{\text{old}}} (q) = \mathbb{P}_{o \sim \pi_{\theta_{\text{old}}(.|q) }}  ( r(q,o)=1 ) 
\end{equation}
Hence, for a Bernoulli random variable, the mean and variance are::
\[ \mu_{\pi_{\theta_{\text{old}}}}(q)   = p(q) \text{ and } \sigma^2_{\pi_{\theta_{\text{old}}}}(q) = p(q)(1-p(q)) .\]
Let us assume in the following that \underline{ $0<p(q)<1$} so that $\sigma^2_{\pi_{\theta_{\text{old}}}}(q)>0$.   Replacing mean and variance in  the whitened reward  in \eqref{eq:adv} we obtain :

\begin{equation}
\begin{array}{cc}
\displaystyle
A_{\pi_{\theta_{\text{old}}}}(q,o) =
\begin{cases} 
\frac{1-p(q)}{\sqrt{p(q)(1-p(q))}} & \text{if } r(q,o) =1, \\
-\frac{p(q)}{\sqrt{p(q)(1-p(q))}} & \text{if } r(q,o)= 0.
\end{cases}
&
%\quad
\text{i.e,}
\quad
\displaystyle
A_{\pi_{\theta_{\text{old}}}}(q,o) =
\begin{cases} 
\sqrt{\frac{1-p(q)}{p(q)}} & \text{if } r(q,o) =1, \\
-\sqrt{\frac{p(q)}{1-p(q)}} & \text{if } r(q,o)= 0.
\end{cases}
\end{array}
\label{eq:advbinary}
\end{equation}

\paragraph{Calibrated reward behavior.}
We see that the whitening or the normalization of the verifiable reward in GRPO, calibrates the reward with respect to the conditional distribution of the reward under $\pi_{\theta_{\text{old}}}(.|q)$ for every prompt $q$. This normalization results in a calibration of the reward that involves non linear functions of  the probability of the success (PoS) of the old policy $p(q)$. See  Figure \ref{fig:calibration} for an illustration.
For a correct answer $r(q,o)=1$, the calibrated reward is positive and \emph{decreases} with the PoS $p(q)$: rare successes (small $p(q)$) receive more credit than easy ones (large $p(q)$).
For an incorrect answer ($r(q,o)=0$), the calibrated reward is negative, and its absolute value is increasing with $p(q)$. Wrong outcomes are more penalized when success is likely (for high $p(q)$) and less penalized when success is rare (low $p(q)$).

\begin{figure}[t!]
    \centering  % Centers the image on the page
    \includegraphics[width=0.5\textwidth]{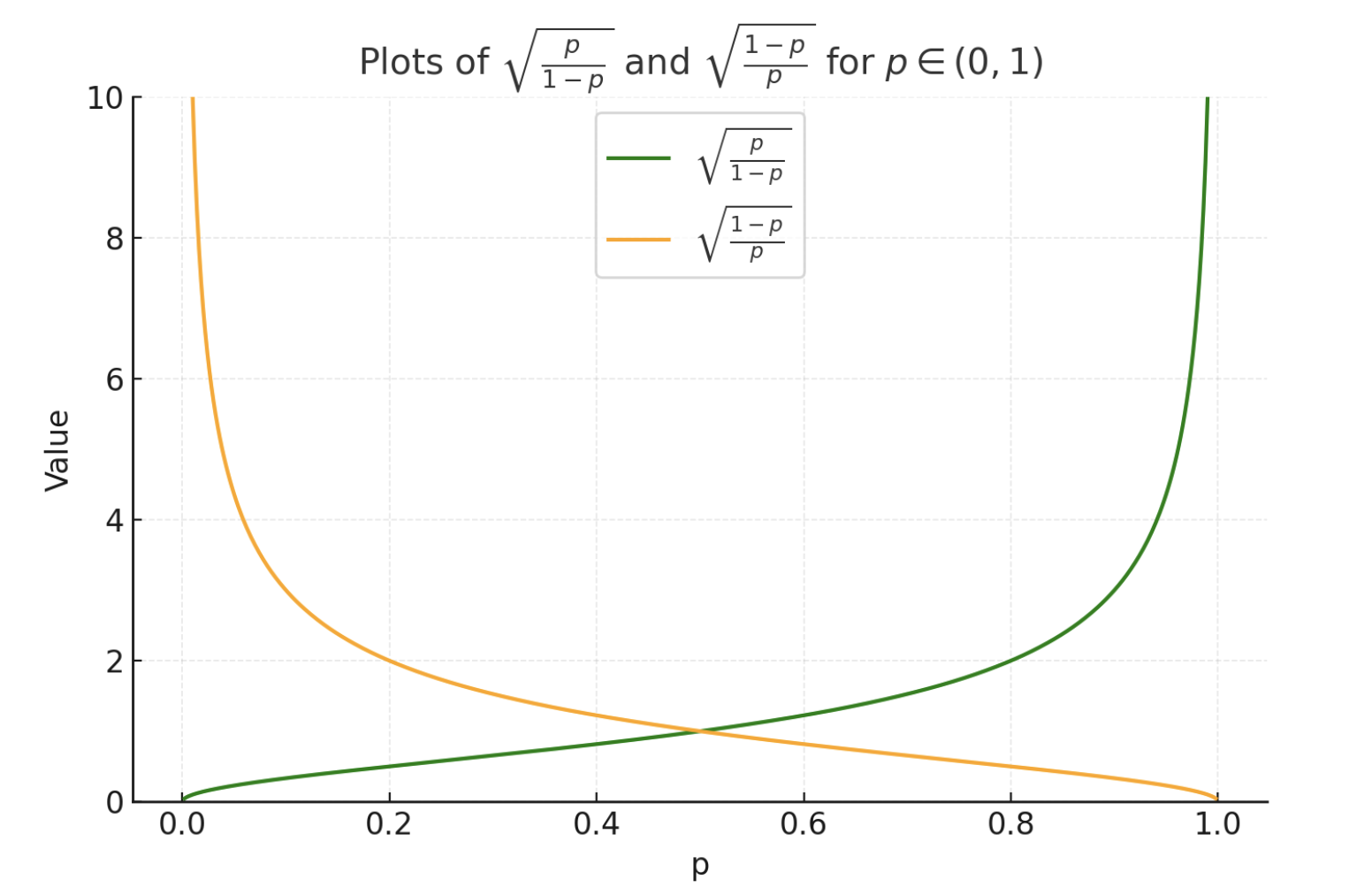}  % Adjust the width and file name as needed
    \caption{Weighting of GRPO with the probability of success of the old policy.}  % Add a caption for the image
    \label{fig:calibration}  % Label for referencing the image
\end{figure}

\subsection{GRPO with verifiable Reward As a Weighted Contrastive Loss}

Replacing the calibrated reward $A_{\pi_{\theta_{\text{old}}}}(q,o)$ (\eqref{eq:advbinary}) for a verifiable reward in GRPO's optimization objective \eqref{eq:GRPO} we obtain:  
\begin{align*}
\mathbb{E}_{o\sim \pi_{\theta_{\text{old}}} (. |q) } f_{\epsilon}\left( \frac{\pi_{\theta} (o| q )}{\pi_{\theta_{\text{old}}}(o| q) }, A_{\pi_{\theta_{\text{old}}}}(q,o) \right) &= \sqrt{\frac{1-p(q)}{p(q)}} \, \mathbb{E}_{o\sim \pi_{\theta_{\text{old}}} (. |q),\, r(q,o)=1} \min\left(\frac{\pi_{\theta} (o| q )}{\pi_{\theta_{\text{old}}}(o| q)}, 1+\epsilon \right)\\
&- \sqrt{\frac{p(q)}{1-p(q)}} \, \mathbb{E}_{o\sim \pi_{\theta_{\text{old}}} (. |q),\, r(q,o)=0} \max\left(\frac{\pi_{\theta} (o| q )}{\pi_{\theta_{\text{old}}}(o| q)}, 1-\epsilon \right),
\end{align*}
where we used that for $x>0$ and $y>0$, $f_{\epsilon}(x,y)=x\min(y,1+\epsilon)$ and for $x>0,y<0$, $f_{\epsilon}(x,y)=x\max(y,1-\epsilon)$.

The overall cost is further obtained by taking expectation over $q$, noting $p(q)= p_{\theta_{\text{old}}} (q) $:
$$\textcolor{orange}{ \mathbb{E}_{q\sim \rho_{\mathcal{Q}}} \sqrt{\frac{1-p_{\theta_{\text{old}}} (q) }{p_{\theta_{\text{old}}} (q) }} \mathbb{E}_{o\sim \pi_{\theta_{\text{old}}} (. |q) } \min\left(\frac{\pi_{\theta} (o| q )}{\pi_{\theta_{\text{old}}}(o| q)},1+\epsilon \right)\mathbbm{1}_{r(q,o)=1}}$$
$$-\textcolor{darkgreen}{\mathbb{E}_{q\sim \rho_{\mathcal{Q}}} \sqrt{\frac{p_{\theta_{\text{old}}} (q) }{(1-p_{\theta_{\text{old}}} (q) )}} \mathbb{E}_{o\sim \pi_{\theta_{\text{old}}} (. |q) } \max \left(\frac{\pi_{\theta} (o| q )}{\pi_{\theta_{\text{old}}}(o| q)}, 1-\epsilon \right ) \mathbbm{1}_{r(q,o)=0} }- \beta \mathsf{KL} (\pi_{\theta} || \pi_{\mathrm{ref}}) $$
~\\
\noindent We see that GRPO is effectively a weighted contrastive loss that is weighted by a ratio depending on the probability of success of $\pi_{\theta_{\text{old}}} (.|q)$.
We see  from the weights plots  that if the success probability of the old policy is high  ($p >0.5$), the weighting for points with success is low since the old policy is already good, and for failing points the weight is high and hence they are more penalized.
On the other hand if the success probability of old policy is low ($p<0.5$), the weighting for points with success is high since we want to reinforce those successes, and for failing points these are still penalized but with a small weight.

%\noindent Moreover due to the clipping we have:
%\begin{itemize}
%\item for correct outputs the cost is constant $(1+\epsilon)$  if $\pi_{\theta}( o|q  ) \geq (1+\epsilon) \pi_{\theta_{\text{old}}}( o|q  )  $
%\item for wrong outputs the cost is $(1-\epsilon)$ if  $\pi_{\theta}( o|q  ) \leq (1-\epsilon) \pi_{\theta_{\text{old}}}( o|q  )$, 
 %\end{itemize}

%In summary, the standardized reward or the advantage function used in GRPO results in an interesting adaptive weighted contrastive loss : if the probability of success of the old policy is high, the wrong answers are more penalized than the correct ones are reinforced. If the probability of success of old policy is low, the correct answers are more reinforced than the wrong answers are penalized.

\subsection{Stabilized GRPO } Note that in the previous sections we assumed that $0<p(q)<1$, so we ensure $\sigma^2_{\pi_{\theta_{\text{old}}}}(q)>0$. In the following, we alleviate this in the following by adding a smoothing factor $\varepsilon \in (0,1]$ in the advantage as follows:
 $$A_{\pi_{\theta_{\text{old}}}}(q,o)= \frac{r(q,o) -\mu_{\pi_{\theta_{\text{old}}}}(q)}{\sqrt{\sigma^2_{\pi_{\theta_{\text{old}}}}(q)+\varepsilon}}.$$
 This results with the following stabilized whitened reward: 
\begin{equation}
\label{eq:casesmeanvar}
A_{\pi_{\theta_{\text{old}}}}(q,o)=
\begin{cases}
+\omega^+_{\varepsilon}(p(q)), & r(q,o)=1,\\
-\omega^-_{\varepsilon}(p(q)), & r(q,o)=0,
\end{cases}
\quad
\omega^+_{\varepsilon}(p)=\frac{1-p}{\sqrt{p(1-p)+\varepsilon}},
\qquad
\omega^-_{\varepsilon}(p)=\frac{p}{\sqrt{p(1-p)+\varepsilon}},
\end{equation}
with smoothing $\varepsilon>0$. 

\iffalse
 \begin{minipage}{0.48\textwidth}
\begin{equation}
A(q,o) =
\begin{cases} 
\omega^+_{\varepsilon}(p(q))& \text{if } r(q,o) =1, \\
-\omega^-_{\varepsilon}(p(q)) & \text{if } r(q,o)= 0.
\end{cases}
\label{eq:cases}
\end{equation}
\end{minipage}
%\hfill
\quad \text{where}
\begin{minipage}{0.48\textwidth}
\begin{align}
 \omega^+_{\varepsilon}(p) &=  \frac{1-p}{\sqrt{p(1-p) +\varepsilon}} \nonumber \\
 \omega^-_{\varepsilon}(p) &= \frac{p}{\sqrt{p(1-p) +\varepsilon}}.
 \label{eq:weights}
\end{align}
\end{minipage}
\iffalse 
 \begin{equation}
A(q,o) =
\begin{cases} 
\frac{1-p}{\sqrt{p(1-p)} + \epsilon} & \text{if } r(q,o) =1, \\
-\frac{p}{\sqrt{p(1-p)} +\epsilon} & \text{if } r(q,o)= 0.
\label{eq:cases}
\end{cases}
\end{equation}
\fi

\noindent Let us denote 
\begin{align}
 \omega^+_{\varepsilon}( p) &=  \frac{1-p  }{\sqrt{p(1-p) +\varepsilon}} \nonumber\\
 \omega^-_{\varepsilon}(p) &= \frac{p}{\sqrt{p(1-p) +\varepsilon}}. 
 \label{eq:weights}
 \end{align}
\fi
Replacing the stabilized advantage in \eqref{eq:GRPO},  we obtain the following contrastive optimization:
 \begin{align*}
&   \max_{\theta}  \mathbb{E}_{q\sim \rho_{\mathcal{Q}}} \Big \{  \omega^+_{\varepsilon} (p_{\theta_{\text{old}}}(q) ) \mathbb{E}_{o\sim \pi_{\theta_{\text{old}}} (. |q) }  \min\left(\frac{\pi_{\theta} (o| q )}{\pi_{\theta_{\text{old}}}(o| q)},1+\epsilon \right)\mathbbm{1}_{r(q,o)=1} \\
& ~~~~~~~~~~~~~~~~~~~~~~ - \omega^-_{\varepsilon} (p_{\theta_{\text{old}}}(q) ) \mathbb{E}_{o\sim \pi_{\theta_{\text{old}}} (. |q) } \max \left(\frac{\pi_{\theta} (o| q )}{\pi_{\theta_{\text{old}}}(o| q)}, 1-\epsilon \right ) \mathbbm{1}_{r(q,o)=0} \Big \}  - \beta \mathrm{KL} (\pi_{\theta} || \pi_{\mathrm{ref}}) 
\end{align*}

 \paragraph{Stabilized GRPO with No Clipping}
 
 \noindent To simplify Equation \ref{eq:GRPO}, let us consider this objective without the clipping ($\epsilon \to + \infty$); we obtain:
\begin{equation}
\max_{\theta} \mathbb{E}_{q\sim \rho_{\mathcal{Q}}} \mathbb{E}_{o\sim \pi_{\theta_{\text{old}}} (. |q) }\frac{\pi_{\theta} (o| q )}{\pi_{\theta_{\text{old}}}(o| q) } A_{\pi_{\theta_{\text{old}}}}(q,o) - \beta \mathsf{KL} (\pi_{\theta} || \pi_{\mathrm{ref}}) 
\label{eq:GRPO-noClip}
\tag{GRPO}
\end{equation}
~~\\
 Taking the clipping parameter $\epsilon \to \infty$ we obtain GRPO with no clipping equivalent contrastive optimization as follows: 
 \begin{align}
&  \max_{\theta}  \mathbb{E}_{q\sim \rho_{\mathcal{Q}}}  \Big \{  \omega^+_{\varepsilon} (p_{\theta_{\text{old}}}(q) )\mathbb{E}_{o\sim \pi_{\theta_{\text{old}}} (. |q) } \frac{\pi_{\theta} (o| q )}{\pi_{\theta_{\text{old}}}(o| q)}\mathbbm{1}_{r(q,o)=1}\nonumber\\
& -  \omega^-_{\varepsilon} (p_{\theta_{\text{old}}}(q) ) \mathbb{E}_{o\sim \pi_{\theta_{\text{old}}} (. |q) } \frac{\pi_{\theta} (o| q )}{\pi_{\theta_{\text{old}}}(o| q)} \mathbbm{1}_{r(q,o)=0}  \Big \}- \beta \mathrm{KL} (\pi_{\theta} || \pi_{\mathrm{ref}}) 
\label{eq:CLGRPOnoClip}
\tag{GRPO-No-Clip}
\end{align}

\noindent which is equivalent to the following problem: 

\begin{equation}
 \max_{\theta}  \mathbb{E}_{q\sim \rho_{\mathcal{Q}}} \Big \{  \omega^+_{\varepsilon}(p_{\theta_{\text{old}}}(q))  \mathbb{E}_{o\sim \pi_{\theta(. |q) } }\mathbbm{1}_{r(q,o)=1}
- \omega^-_{\varepsilon}(p_{\theta_{\text{old}}}(q)) \mathbb{E}_{o\sim \pi_{\theta} (. |q) } \mathbbm{1}_{r(q,o)=0} \Big\} - \beta \mathrm{KL} (\pi_{\theta} || \pi_{\mathrm{ref}}), 
\label{eq:GRPOnoClip}
\end{equation}

%Note that the formulation \eqref{eq:CLGRPOnoClip} is the one currently implemented in HuggingFace TRL library \citep{vonwerra2022trl}
We will focus first  on this non-clipped  version.

\subsection{GRPO Iterations} Algorithm \ref{alg:iter-grpo} in Appendix \ref{sec:alg}  summarizes GRPO iterations (Stabilized and no clipping). We see that  GRPO iterations can be written as a sequence  of optimization resulting in policies we  denote by $\pi_{\theta_n}$ the policy at iteration $n$.  We see that GRPO iterations  can be written for $n\geq 1$:

\begin{equation}
\theta_{n} = \argmax_{\theta}  \mathbb{E}_{q\sim \rho_{\mathcal{Q}}} \Big \{ \omega^+_{\varepsilon}(p_{\theta_{n-1}}(q))  \mathbb{E}_{o\sim \pi_{\theta(. |q) } }\mathbbm{1}_{r(q,o)=1}
- \omega^-_{\varepsilon}(p_{\theta_{n-1}}(q)) \mathbb{E}_{o\sim \pi_{\theta} (. |q) } \mathbbm{1}_{r(q,o)=0} \Big \}  - \beta \mathrm{KL} (\pi_{\theta} || \pi_{\mathrm{ref}}), 
\label{eq:grpoparametric}
\end{equation} 
Note that in Algorithm \ref{alg:iter-grpo}, expectations are estimated using importance sampling from $\pi_{\theta_{n-1}}$, and each maximization problem is solved via gradient for $\mu$ steps. PoS are estimated using a group size $G$, i.e $G$ Monte-Carlo rollouts from $\pi_{\theta_{\text{old}}}(.|q)$.

In the following we will replace the maximization on the parameter space of the policy by maximizing over the space of  policies (i.e optimization on the probability space) in order to analyze the dynamics of GRPO iterations as follows,  for $n\geq 1$:
\begin{equation}
\pi_{n} = \argmax_{\pi}  \mathbb{E}_{q\sim \rho_{\mathcal{Q}}} \Big \{ \omega^+_{\varepsilon}\left(p_{n-1}(q)\right)  \mathbb{E}_{o\sim \pi(. |q) } \mathbbm{1}_{r(q,o)=1}
- \omega^-_{\varepsilon}(p_{n-1}(q)) \mathbb{E}_{o\sim \pi (. |q) } \mathbbm{1}_{r(q,o)=0} \Big \}  - \beta \mathrm{KL} (\pi|| \pi_{\mathrm{ref}}), 
\label{eq:GRPO_iterations}
\tag{GRPO Iterations}
\end{equation}
where $p_{n-1}(q)$ is the probability of success of the policy $\pi_{n-1}(\cdot|q)$:
\begin{equation}
p_{n-1}(q)= \mathbb{E}_{o \sim \pi_{n-1} ( .| q )} \mathbbm{1}_{r(q,o)=1} 
\label{eq:psuccess}
\end{equation}
and the weights $\omega^+_\varepsilon$ and $\omega^-_{\varepsilon}$ are given in  \eqref{eq:casesmeanvar}. We assume  all throughout the paper that $\pi_0= \pi_{\mathrm{ref}}.$
Note that moving the optimization from a parametric space to the probability space can be seen as assuming that the hypothesis class of the parametric policies is large enough to represent all policies. 
Note that in  GRPO iterations the policy at iteration $n$ depends upon the policy $\pi_{n-1}$ via the probability of success $p_{n-1}$, as well on the reference policy via the $\mathsf{KL}$ regularizer. 

\section{GRPO Dynamics: Fixed Point iteration for Probability of Success  } \label{sec:Recursion}

Our goal in this Section is to analyze the dynamics of the GRPO iterations given in \eqref{eq:GRPO_iterations}.

\begin{theorem} [GRPO Policy Dynamics]
Optimal GRPO iterations policies solving \eqref{eq:GRPO_iterations} satisfy the following recursion, for $n\geq 1$:
\[ \pi_{n}(o|q)  = \frac{1}{Z_{n-1}(q)}\pi_{\mathrm{ref}}(o|q) \exp \left(\frac{1}{\beta} \left( \omega^+_{\varepsilon}(p_{n-1}(q)) \mathbbm{1}_{r(q,o)=1}  - \omega^-_{\varepsilon}(p_{n-1}(q))   \mathbbm{1}_{r(q,o)=0} \right) \right),  \]
where 
$ Z_{n-1}(q) = p_{\mathrm{ref}}(q) \exp \left(\frac{1}{\beta} \omega^+_{\varepsilon}(p_{n-1}(q))   \right) +  (1-p_{\mathrm{ref}}(q))  \exp \left(- \frac{1}{\beta} \omega^-_{\varepsilon} (p_{n-1} (q)) \right) ,$
where the weights $\omega^+_\varepsilon$ and $\omega^-_{\varepsilon}$ are given in \eqref{eq:cases}, the probability of success $p_{n-1}(q)$ of policy $\pi_{n-1}(\cdot | q)$  is given in   \eqref{eq:psuccess}, and $p_{\mathrm{ref}}(q)$ is the probability of success of the reference policy $\pi_{\mathrm{ref}} (\cdot|q)$:  $ p_{\mathrm{ref}}(q) = \mathbb{E}_{o \sim \pi_{\mathrm{ref}} (\cdot | q) } \mathbbm{1}_{r(q,o)=1}$.
\label{theo:piexpression}
\end{theorem}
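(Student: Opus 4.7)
\smallskip
\noindent\textbf{Proof proposal.} The plan is to reduce the GRPO iteration to a standard Gibbs variational problem, which decouples over prompts because both the objective and the KL regularizer factor pointwise in $q$.

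First, I would write the KL term as $\mathsf{KL}(\pi\Vert\pi_{\mathrm{ref}}) = \mathbb{E}_{q\sim\rho_{\mathcal{Q}}}\bigl[\mathsf{KL}(\pi(\cdot|q)\Vert\pi_{\mathrm{ref}}(\cdot|q))\bigr]$ and observe that the remaining terms in \eqref{eq:GRPO_iterations} are already expectations over $q$ with integrands depending only on $\pi(\cdot|q)$. Therefore it suffices to solve, for each fixed $q$ with $p_{n-1}(q)\in(0,1)$, the pointwise problem
\[
\max_{\pi(\cdot|q)\in\Delta(\mathcal{O})}\ \mathbb{E}_{o\sim\pi(\cdot|q)}\bigl[R_{n-1}(q,o)\bigr] \;-\; \beta\,\mathsf{KL}\bigl(\pi(\cdot|q)\,\Vert\,\pi_{\mathrm{ref}}(\cdot|q)\bigr),
\]
where the effective per-prompt reward is
\[
R_{n-1}(q,o) \;=\; \omega^{+}_{\varepsilon}(p_{n-1}(q))\,\mathbbm{1}_{r(q,o)=1}\;-\;\omega^{-}_{\varepsilon}(p_{n-1}(q))\,\mathbbm{1}_{r(q,o)=0}.
\]

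Second, I would solve this per-prompt problem by the classical Gibbs/Donsker--Varadhan argument. Introducing a Lagrange multiplier for the normalization constraint $\int\pi(o|q)\,do=1$ and setting the functional derivative with respect to $\pi(o|q)$ to zero, the first-order condition reads $R_{n-1}(q,o) - \beta\log(\pi(o|q)/\pi_{\mathrm{ref}}(o|q)) - \beta - \lambda = 0$, giving
\[
\pi_{n}(o|q) \;=\; \frac{1}{Z_{n-1}(q)}\,\pi_{\mathrm{ref}}(o|q)\,\exp\!\bigl(R_{n-1}(q,o)/\beta\bigr),
\]
which is exactly the target expression. Strict concavity of $-\mathsf{KL}(\cdot\,\Vert\,\pi_{\mathrm{ref}}(\cdot|q))$ in $\pi(\cdot|q)$ ensures the critical point is the unique maximizer; alternatively one may invoke the Donsker--Varadhan identity to obtain the optimum directly, avoiding any differentiation.

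Third, I would compute the normalizer $Z_{n-1}(q)$ by exploiting the binary nature of $r$. Since $R_{n-1}(q,o)$ takes only two values on $\mathcal{O}$, partitioning $\mathcal{O}$ into the success set $\{o:r(q,o)=1\}$ and its complement and using $p_{\mathrm{ref}}(q)=\mathbb{E}_{o\sim\pi_{\mathrm{ref}}(\cdot|q)}\mathbbm{1}_{r(q,o)=1}$ yields
\[
Z_{n-1}(q) \;=\; p_{\mathrm{ref}}(q)\,\exp\!\bigl(\omega^{+}_{\varepsilon}(p_{n-1}(q))/\beta\bigr) \;+\; (1-p_{\mathrm{ref}}(q))\,\exp\!\bigl(-\omega^{-}_{\varepsilon}(p_{n-1}(q))/\beta\bigr),
\]
matching the stated formula.

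The only subtle point I anticipate is formal: the optimization is over the infinite-dimensional simplex $\Delta(\mathcal{O})$, so the Lagrangian argument must be justified (e.g.\ by absolute continuity of the maximizer with respect to $\pi_{\mathrm{ref}}$ — enforced by finiteness of the KL term — and by the Donsker--Varadhan variational formula, which bypasses measure-theoretic technicalities entirely). Once this is in place, the decoupling over $q$, the Gibbs form of the optimizer, and the two-term normalizer all follow with no further calculation.
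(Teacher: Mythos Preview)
Your proposal is correct and follows essentially the same approach as the paper: the paper's proof simply notes that the objective is concave, invokes first-order optimality conditions to obtain the Gibbs form $\pi_n(o|q)\propto\pi_{\mathrm{ref}}(o|q)\exp(R_{n-1}(q,o)/\beta)$, and then computes $Z_{n-1}(q)$ by splitting the expectation under $\pi_{\mathrm{ref}}$ over the events $\{r(q,o)=1\}$ and $\{r(q,o)=0\}$. Your write-up is in fact more detailed than the paper's (you make the per-prompt decoupling and the Donsker--Varadhan/Lagrangian justification explicit), but the route is the same.
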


We turn now to the recursion satisfied by the probability of success  $p_n(q)$ of the policy $\pi_{n} (\cdot | q)$, we have the following theorem that shows that this success probability satisfies a fixed point iteration:

\begin{theorem}[GRPO's Probability of Success Fixed Point Iteration] Assume $p_{\mathrm{ref}}>0$, define  for $\beta>0$:
$$h_{\varepsilon,p_{\mathrm{ref}}}(p)   = \frac{1}{1+ \frac{1-p_{\mathrm{ref}}}{p_{\mathrm{ref}}} \exp \left(-\frac{1}{\beta}  \frac{1}{\sqrt{p(1-p) + \varepsilon}}\right) }$$
The probability of success along GRPO's iteration satisfies the following fixed point iteration i.e we have almost surely  for all $q$ for $n\geq 1$
\begin{equation}
p_{n}(q) = h_{\varepsilon,p_{\mathrm{ref}}(q)}(p_{n-1} (q)) ,
\label{eq:fixedIteration}
\end{equation}
and $p_0(q) = p_{\mathrm{ref}} (q)$.
\label{theo:IterationProba}
\end{theorem}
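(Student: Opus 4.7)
The plan is to obtain the recursion by a direct computation: plug the closed form of $\pi_n(o|q)$ from Theorem \ref{theo:piexpression} into the definition \eqref{eq:psuccess} of $p_n(q)$, and simplify using a cancellation identity for the weights $\omega_\varepsilon^\pm$.

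First I would fix a prompt $q$ and partition the outcome space $\mathcal{O}$ into the success set $\{o : r(q,o)=1\}$ and the failure set $\{o : r(q,o)=0\}$. On the success set, the exponential factor in Theorem \ref{theo:piexpression} reduces to $\exp(\omega_\varepsilon^+(p_{n-1}(q))/\beta)$, and on the failure set to $\exp(-\omega_\varepsilon^-(p_{n-1}(q))/\beta)$. Summing (or integrating) $\pi_n(o|q)$ against $\mathbbm{1}_{r(q,o)=1}$ pulls this constant factor out of the sum, and the remaining integral of $\pi_{\mathrm{ref}}(o|q)$ over the success set is, by definition, $p_{\mathrm{ref}}(q)$. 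This gives
\[
p_n(q) \;=\; \frac{p_{\mathrm{ref}}(q)\,\exp\!\bigl(\omega_\varepsilon^+(p_{n-1}(q))/\beta\bigr)}{p_{\mathrm{ref}}(q)\,\exp\!\bigl(\omega_\varepsilon^+(p_{n-1}(q))/\beta\bigr) + (1-p_{\mathrm{ref}}(q))\,\exp\!\bigl(-\omega_\varepsilon^-(p_{n-1}(q))/\beta\bigr)},
\]
where the denominator is exactly $Z_{n-1}(q)$.

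Next I would divide numerator and denominator by $p_{\mathrm{ref}}(q)\exp(\omega_\varepsilon^+(p_{n-1}(q))/\beta)$, which is strictly positive since $p_{\mathrm{ref}}(q)>0$ by assumption. This yields
\[
p_n(q) \;=\; \frac{1}{1 + \dfrac{1-p_{\mathrm{ref}}(q)}{p_{\mathrm{ref}}(q)}\,\exp\!\bigl(-(\omega_\varepsilon^+(p_{n-1}(q))+\omega_\varepsilon^-(p_{n-1}(q)))/\beta\bigr)}.
\]
The key algebraic identity, which is the heart of the simplification, is that for all $p \in [0,1]$,
\[
\omega_\varepsilon^+(p) + \omega_\varepsilon^-(p) \;=\; \frac{(1-p)+p}{\sqrt{p(1-p)+\varepsilon}} \;=\; \frac{1}{\sqrt{p(1-p)+\varepsilon}},
\]
directly from the definitions in \eqref{eq:casesmeanvar}. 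Substituting this sum into the expression for $p_n(q)$ recovers exactly $h_{\varepsilon,p_{\mathrm{ref}}(q)}(p_{n-1}(q))$, proving \eqref{eq:fixedIteration}. Finally, the initial condition $p_0(q)=p_{\mathrm{ref}}(q)$ is immediate from the standing assumption $\pi_0=\pi_{\mathrm{ref}}$.

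There is no serious obstacle: once Theorem \ref{theo:piexpression} is in hand, the result is a one-line integration followed by the $\omega^+ + \omega^- = 1/\sqrt{p(1-p)+\varepsilon}$ cancellation. The only mild subtlety is the $p_{\mathrm{ref}}(q)>0$ assumption, which ensures both the normalization step and the logit-style rewriting are well defined; if $p_{\mathrm{ref}}(q)=0$ on a set of prompts, the recursion would be vacuous there since $\pi_n(\cdot|q)$ remains supported on failures for all $n$.
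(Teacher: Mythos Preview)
Your proposal is correct and follows essentially the same approach as the paper: integrate the closed form of $\pi_n$ from Theorem~\ref{theo:piexpression} against $\mathbbm{1}_{r(q,o)=1}$, obtain the ratio with $Z_{n-1}(q)$ in the denominator, then divide through and use $\omega_\varepsilon^+(p)+\omega_\varepsilon^-(p)=1/\sqrt{p(1-p)+\varepsilon}$ to recover $h_{\varepsilon,p_{\mathrm{ref}}}$. The paper performs the same computation step by step; your version is slightly cleaner in that you isolate the weight-sum identity explicitly.
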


\begin{remark}[Importance of $\varepsilon >0$] Note if $\varepsilon=0$, $h_{\varepsilon, p_{\mathrm{ref}}}$ is no longer continuous at $0$ and $1$ and we can no longer guarantee existence of fixed points on $[0,1]$.
\end{remark}

%\textcolor{red}{Comment the figures}

\begin{figure}[ht!]  % 'h!' means place the figure here if possible
    \centering  % Centers the image on the page
    \includegraphics[width=0.5\textwidth]{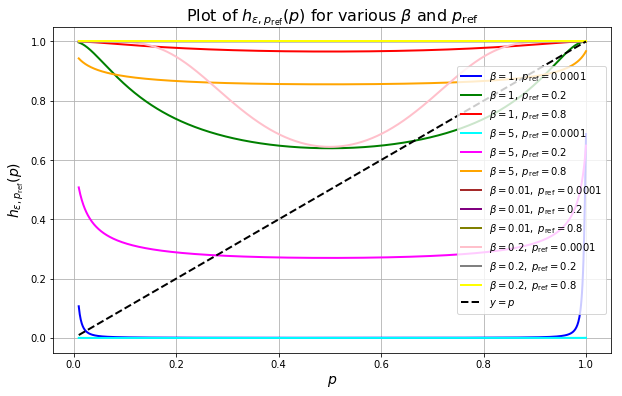}  % Adjust the width and file name as needed
    \caption{Fixed points as function of $\beta$ and $p_{\mathrm{ref}}$ for $\varepsilon = 1e^{-5}$.}  % Add a caption for the image
    \label{fig:hepsilon}  % Label for referencing the image
\end{figure}

\noindent We study in the following proposition properties of the function $h_{\varepsilon, p_{\mathrm{ref}}}$:

\begin{proposition} [Properties of $h_{\varepsilon, p_{\mathrm{ref}}}$]  $h_{\varepsilon, p_{\mathrm{ref}}}$ satisfies the following properties:
\begin{itemize}
\item  Existence of fixed points: $h_{\varepsilon, p_{\mathrm{ref} }}$ is continuous $ from [0,1]$ to $[0,1]$ and hence admits at least a fixed point $p^*$ in $[0,1]$ (no guarantees for a unique fixed point)
\item Monotonicity: $h'_{\varepsilon,p_{\mathrm{ref}}}(p) 
 = - h_{\varepsilon, p_{\mathrm{ref}}}(p) (1-h_{\varepsilon, p_{\mathrm{ref}}}(p) ) \frac{1-2p}{2\beta\,[p(1-p)+\varepsilon]^{3/2}}$

\begin{itemize}
\item if $p<\frac{1}{2}$, $h'_{\varepsilon,p_{\mathrm{ref}}}(p)<0 $  and $h_{\varepsilon, p_{\mathrm{ref} }}(p)$ is decreasing 
\item if $p>\frac{1}{2}$ $h'_{\varepsilon,p_{\mathrm{ref}}}(p)>0 $  and $h_{\varepsilon, p_{\mathrm{ref}}}(p)$ is increasing 
\item if $p= \frac{1}{2}$ $h'_{\varepsilon, p_{\mathrm{ref}}}(p)= 0$ and $p=\frac{1}{2}$ achieves its minimum
\end{itemize}
\item Let $\logit(p) = \log\!\left(\frac{p}{1-p}\right),  
\sigma(x) = \frac{1}{1+e^{-x}},
((\logit\circ\sigma)(x)=x,\ (\sigma\circ\logit)(p)=p).$ Define $\Omega_{\varepsilon}(p)= \omega^{+}_{\varepsilon}(p) + \omega^{-}_{\varepsilon}(p)=  (p(1-p)+ \varepsilon)^{-\frac{1}{2}}.$
 We have: \[h_{\varepsilon,p_{\mathrm{ref}}}(p)=\sigma\!\left(\logit\big(p_{\mathrm{ref}}\big)
+\frac{\Omega_{\varepsilon}(p)}{\beta}\right).\]
\end{itemize}
 \label{pro:propreties}
\end{proposition}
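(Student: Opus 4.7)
The cleanest route is to prove the three bullets in reverse order, since the sigmoid representation (third bullet) makes both the continuity claim (first bullet) and the derivative computation (second bullet) into one-liners. First I would establish the sigmoid identity by a direct algebraic manipulation: note that $\tfrac{1-p_{\mathrm{ref}}}{p_{\mathrm{ref}}}=\exp\!\bigl(-\logit(p_{\mathrm{ref}})\bigr)$, and that the quantity appearing in the exponential in the definition of $h_{\varepsilon,p_{\mathrm{ref}}}$ is exactly $-\Omega_{\varepsilon}(p)/\beta$, since
\[
\omega^{+}_{\varepsilon}(p)+\omega^{-}_{\varepsilon}(p)=\frac{(1-p)+p}{\sqrt{p(1-p)+\varepsilon}}=(p(1-p)+\varepsilon)^{-1/2}=\Omega_{\varepsilon}(p).
\]
Factoring out the $\exp(-\logit(p_{\mathrm{ref}}))$ and recognizing the resulting $\frac{1}{1+e^{-x}}$ form yields $h_{\varepsilon,p_{\mathrm{ref}}}(p)=\sigma\!\bigl(\logit(p_{\mathrm{ref}})+\Omega_{\varepsilon}(p)/\beta\bigr)$.

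For the existence of a fixed point, I would observe that because $\varepsilon>0$, the map $p\mapsto p(1-p)+\varepsilon$ is bounded away from zero on $[0,1]$, so $\Omega_{\varepsilon}$ is continuous there; composing with the smooth sigmoid $\sigma$, which takes values in $(0,1)$, shows that $h_{\varepsilon,p_{\mathrm{ref}}}$ is a continuous self-map of $[0,1]$ (in fact into $(0,1)$). Brouwer's fixed-point theorem in dimension one (equivalently, the intermediate value theorem applied to $h_{\varepsilon,p_{\mathrm{ref}}}(p)-p$ at the endpoints, where the differences are $h(0)>0$ and $h(1)-1<0$) then gives at least one fixed point $p^{*}\in[0,1]$. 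No uniqueness claim is needed and indeed none is true in general.

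For the monotonicity bullet I would differentiate the sigmoid representation using the chain rule. Setting $u(p)=\logit(p_{\mathrm{ref}})+\Omega_{\varepsilon}(p)/\beta$, the standard identity $\sigma'(x)=\sigma(x)(1-\sigma(x))$ gives
\[
h'_{\varepsilon,p_{\mathrm{ref}}}(p)=h_{\varepsilon,p_{\mathrm{ref}}}(p)\bigl(1-h_{\varepsilon,p_{\mathrm{ref}}}(p)\bigr)\,u'(p).
\]
Differentiating $\Omega_{\varepsilon}(p)=(p(1-p)+\varepsilon)^{-1/2}$ using the power/chain rule yields $\Omega_{\varepsilon}'(p)=-\tfrac{1-2p}{2(p(1-p)+\varepsilon)^{3/2}}$, hence $u'(p)=-\tfrac{1-2p}{2\beta(p(1-p)+\varepsilon)^{3/2}}$, which matches the stated formula. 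Since $h(1-h)>0$ and $[p(1-p)+\varepsilon]^{3/2}>0$, the sign of $h'$ is opposite to that of $1-2p$: decreasing on $[0,1/2)$, zero at $p=1/2$, and increasing on $(1/2,1]$, with the minimum attained at $p=1/2$.

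Every step is a direct calculation, so there is no real obstacle. The one subtlety worth flagging is the role of $\varepsilon>0$: it keeps $(p(1-p)+\varepsilon)^{-1/2}$ continuous and bounded on all of $[0,1]$ (in particular at the endpoints $p=0,1$), which is precisely what makes $h_{\varepsilon,p_{\mathrm{ref}}}$ a continuous self-map and lets Brouwer apply; as the subsequent remark already notes, dropping $\varepsilon$ destroys continuity at $\{0,1\}$ and invalidates the fixed-point guarantee.
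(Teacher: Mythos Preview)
Your proposal is correct and matches the paper's own proof in all essentials: both arguments rewrite $h_{\varepsilon,p_{\mathrm{ref}}}$ in the sigmoid form $\sigma(\logit(p_{\mathrm{ref}})+\Omega_\varepsilon(p)/\beta)$, invoke Brouwer for the fixed point, and differentiate via $\sigma'=\sigma(1-\sigma)$ together with $\Omega_\varepsilon'(p)=-\tfrac{1-2p}{2(p(1-p)+\varepsilon)^{3/2}}$. The only cosmetic difference is that you establish the sigmoid identity first and then reuse it, whereas the paper treats the three bullets in the stated order (but still derives and uses the sigmoid form inside the monotonicity step).
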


We drop in the sequel $q$, when referring to the sequence $p_{n}(q)$, and write for short $p_n$. If the sequence defined in  GRPO's probability of success iteration \eqref{eq:fixedIteration} converges we have therefore by continuity of $h_{\varepsilon, p_{\mathrm{ref}}}$:
\[ p_{\infty} = \lim_{n\to \infty} p_{n} = \lim_{n\to \infty} h_{\varepsilon, p_{\mathrm{ref}}}(p_{n-1})=h_{\varepsilon,p_{\mathrm{ref}}} (\lim_{n\to \infty}  p_{n-1}) = h_{\varepsilon,p_{\mathrm{ref}}}(p_{\infty})  ,\]
and hence $p_{\infty} = h_{\varepsilon,p_{\mathrm{ref}}} (p_{\infty})$, and the limit point probability of success of GRPO $p_{\infty}=p^*$ is a fixed point of $h_{\varepsilon,p}$ (fixed points exist by virtue of proposition \ref{pro:propreties}). Note that the fixed point $p^*$ is indeed function of $q$, and this dependency  in $h_{\varepsilon,p_{\mathrm{ref}} }$ is via $p_{\mathrm{ref}}(q)$.

We see in Figure \ref{fig:hepsilon} various plots of the function \( h_{\varepsilon, p_{\mathrm{ref}}} \) for different values of \( \beta \) and initialization \( p_{\mathrm{ref}} \), as well as the plot of the function \( y = p \). Fixed points correspond to the intersections of this line with the curve of \( h_{\varepsilon, p_{\mathrm{ref}}} \). We see that the fixed points are not unique in general, and \( p^* = 1 \) is almost always a fixed point.

%\section{GRPO's Probability of Success Iteration \& Convergence to A Fixed Point} \label{sec:fixedpointsuccess}

\subsection{GRPO: Fixed Point Iteration  and   Success Amplification } \label{sec:amplificationConvergence}

%In this Section we answer the following two questions:  
%\begin{enumerate}  
%\item Under which conditions on \( \beta \) and \( p_{\mathrm{ref}} \) do fixed points of GRPO  iterations, \( p^* \), lead to a probability of success \( p^* \) that is higher than the reference initialization \( p_0 = p_{\mathrm{ref}} \)? 
%\item Do fixed points of GRPO  iterations, \( p^* \), lead to a probability of success \( p^* \) that is higher than the reference initialization \( p_0 = p_{\mathrm{ref}} \)? 
%\item Under which conditions do we have local convergence of the GRPO's probability of success sequence given in \eqref{eq:fixedIteration} to a fixed point $p^*$ of $h_{\varepsilon, p_{\mathrm{ref}}}$?  
%\end{enumerate}  

Note that from the third item in proposition \ref{pro:propreties}
the PoS recurrence in Theorem \ref{theo:IterationProba} can be written in terms of success odds as follows:
\[
\logit\big(p_n(q)\big)=\logit\big( p_{\mathrm{ref}}(q)\big)+\frac{\Omega_{\varepsilon}(p_{n-1}(q))}{\beta}
\]

\begin{theorem}[GRPO amplifies the probability of success] For  $q\sim \rho_{\mathcal{Q}} $ assume $0<p_{\mathrm{ref}}(q)<1$. Let $p^*(q)$ be a fixed point of $h_{\varepsilon,p_{\mathrm{ref}}(q)}$ we have   $p^*(q)>p_{\mathrm{ref}}(q)$.%
%if: 
%\begin{enumerate}
%\item  $p_{\mathrm{ref}}\leq \frac{1}{2}$ for all $\beta>0$. 
%\item $p_{\mathrm{ref}} > \frac{1}{2}$ and $\beta \cosh^2 \left( \frac{1}{2\beta} \frac{1}{\sqrt{ \frac{1}{4} + \varepsilon}}\right) \geq \frac{p_{\mathrm{ref}}(1-p_{\mathrm{ref}}) (2p_{\mathrm{ref}}-1)}{2[p_{\mathrm{ref}}(1-p_{\mathrm{ref}})+\varepsilon]^{3/2}} .$
 %\end{enumerate}
 \label{theo:probaAmplification}
\end{theorem}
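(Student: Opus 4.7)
The plan is to reduce the amplification claim to the logit representation of $h_{\varepsilon,p_{\mathrm{ref}}}$ given in the third bullet of Proposition \ref{pro:propreties}, and then read off the sign of the increment $\logit(p^*(q)) - \logit(p_{\mathrm{ref}}(q))$ directly from the positivity of $\Omega_{\varepsilon}$.

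First I would fix $q$, write $p_{\mathrm{ref}}=p_{\mathrm{ref}}(q)\in(0,1)$ and $p^*=p^*(q)$, and invoke the identity
\[
h_{\varepsilon,p_{\mathrm{ref}}}(p)=\sigma\!\left(\logit(p_{\mathrm{ref}})+\frac{\Omega_{\varepsilon}(p)}{\beta}\right),\qquad \Omega_{\varepsilon}(p)=\frac{1}{\sqrt{p(1-p)+\varepsilon}}.
\]
Evaluating at the fixed point and applying $\logit$ (which is well-defined on the image of $\sigma$, namely $(0,1)$) yields the pointwise identity
\[
\logit(p^*)=\logit(p_{\mathrm{ref}})+\frac{\Omega_{\varepsilon}(p^*)}{\beta}.
\]

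Next, I would argue that the increment $\Omega_{\varepsilon}(p^*)/\beta$ is strictly positive. Since $\varepsilon>0$, the denominator $\sqrt{p^*(1-p^*)+\varepsilon}$ is finite and strictly positive for every $p^*\in[0,1]$, so $\Omega_{\varepsilon}(p^*)\in(0,\varepsilon^{-1/2}]$, and by assumption $\beta>0$. This also implicitly handles the range of $p^*$: because $\sigma$ maps $\R$ into $(0,1)$ and its argument here is finite, any fixed point necessarily lies in $(0,1)$, so $\logit(p^*)$ is well defined and finite.

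Finally, strict monotonicity of $\logit$ (equivalently of $\sigma$) turns the strict inequality $\logit(p^*)>\logit(p_{\mathrm{ref}})$ into $p^*>p_{\mathrm{ref}}$, which is the desired amplification. Since the argument is pointwise in $q$ and only uses $0<p_{\mathrm{ref}}(q)<1$, it holds almost surely under $\rho_{\mathcal{Q}}$. I do not expect any real obstacle: once the sigmoid/logit representation from Proposition \ref{pro:propreties} is in hand, the proof is essentially a one-line sign check, and the role of $\varepsilon>0$ is exactly to keep $\Omega_{\varepsilon}(p^*)$ strictly positive and finite at the boundary values $p^*\in\{0,1\}$ that would otherwise be degenerate.
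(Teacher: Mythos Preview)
Your proof is correct and follows essentially the same idea as the paper's: both arguments reduce to the observation that the positive increment $\Omega_{\varepsilon}(p^*)/\beta>0$ forces $h_{\varepsilon,p_{\mathrm{ref}}}(p^*)>p_{\mathrm{ref}}$. The only cosmetic difference is that you route through the logit/sigmoid identity of Proposition~\ref{pro:propreties}, while the paper works directly with the fractional form of $h_{\varepsilon,p_{\mathrm{ref}}}$ and bounds the exponential term by $1$; your handling of the boundary cases $p^*\in\{0,1\}$ via finiteness of the sigmoid argument is a nice touch that the paper leaves implicit.
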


\iffalse
\begin{figure}[!h]  % 'h!' means place the figure here if possible
    \centering  % Centers the image on the page
    \includegraphics[width=0.5\textwidth]{beta_amplification}  % Adjust the width and file name as needed
    \caption{Condition for probability amplification on $\beta$ is mostly satisified only for high $p$ and small $\beta$ (Blue area) }  % Add a caption for the image
    \label{fig:betaamplifciation}  % Label for referencing the image
\end{figure}
\fi
We see from Theorem \ref{theo:probaAmplification} for any prompt $q$,  the fixed point PoS \( p^*(q) \) of the GRPO iteration leads to an amplification of the probability of success of the reference model \( p_{\mathrm{ref}}(q) \).  Note if $p_{\mathrm{ref}}(q)=0$ or $p_{\mathrm{ref}}(q)=1$ , the iteration will lead to $p^*(q)=0$ and $p^*(q)=1$ respectively.
In this case the fixed point is not necessarily stable and a condition on $\beta$ is needed for its stability (See appendix \ref{sec:refonly} )
% We see from Figure \ref{fig:betaamplifciation} that the condition on \( \beta \) is rather mild, and it implies that for high \( p_{\mathrm{ref}} \), \( \beta \) needs to be selected not too small.  
\iffalse

\fi

\section{Mirror GRPO: Mirror Descent with GRPO Calibrated Reward}\label{Sec:MirrorGRPO}

%\section{Mirror--Only GRPO (Penalty to $\pi_{n-1}$ Only): Closed Form, PoS Recurrence, and Asymptotics}
\label{sec:mirror-only}

Note that we previously considered GRPO with no-clipping and with a $\mathsf{KL}$ regularization to $
\pi_{\mathrm{ref}}$.

%\paragraph{Proximal vs.\ clipping.}
%In practice, \emph{proximal} updates—via KL-penalized or mirror–descent projections—are often preferred to PPO’s clipped surrogate for stability and their link to trust regions \citep{schulman2017ppo}. Recent reasoning systems increasingly adopt \emph{GRPO} in place of PPO; notably, DeepSeek-R1 trains with GRPO \citep{guo2025deepseekr1,nature2025deepseekr1}, and several open implementations report running GRPO with the explicit KL term disabled ($\beta{=}0$) \citep{trlgrpo}. The Kimi k1.5 report similarly describes an \emph{online mirror-descent} update toward the previous iterate (no PPO-style clipping) \citep{kimi2025k15}.

We consider here a mirror GRPO with a regularization to $\pi_{n-1}$ in addition to $\pi_{\mathrm{ref}}$. For $n\geq 1$:
\begin{equation}
\label{eq:mirror-obj1}
\max_{\pi}\; \mathbb{E}_{q\sim \rho_{\mathcal{Q}}} \left(\mathbb{E}_{\pi(\cdot|q)}
A_{n-1}(q,\cdot)-\;\beta\, \left( \alpha \mathsf{KL}\!\Big(\pi(\cdot\mid q)\,\Big\|\,\pi_{\mathrm{ref}}(\cdot\mid q)\Big)+ (1-\alpha ) \mathsf{KL}\!\Big(\pi(\cdot\mid q)\,\Big\|\,\pi_{n-1}(\cdot\mid q)\Big)\right) \right),
\end{equation}
where:
\begin{equation}
\label{eq:group-const-adv}
A_{n-1}(q,o)=
\begin{cases}
+\omega^+_\varepsilon(p_{n-1}(q)), & r(q,o)=1\\[2pt]
-\omega^-_\varepsilon(p_{n-1}(q)), &r(q,o)=0,
\end{cases}
\end{equation}
and $p_{n-1}(q)=\mathbb{P}_{\pi_{n-1}(\cdot|q)}(r(q,o)=1)$, and $\pi_0=\pi_{\mathrm{ref}}$.

If $\alpha=1$ we obtain $\mathsf{KL}$ regularization to the $\pi_{\mathrm{ref}}$. If $\alpha=0$, we obtain mirror regularization to the previous iteration without considering the reference. Many  recent works suggested using $\alpha=0$ such as DAPO \citep{yu2025dapoopensourcellmreinforcement} i.e removing the regularization to the reference in GRPO while maintaining the clipping.  Note that proximal methods with regularization to previous iterates play the same role of clipping \citep{tomar2021mirrordescentpolicyoptimization,gunter2024appleintelligencefoundationlanguage}. Indeed PPO style clipping \citep{schulman2017proximal} was introduced as an approximation of such  proximal mirror descent.

We study in the following the case $\alpha=0$, the general case $\alpha>0$ is analyzed in Appendix \ref{sec:two-kl}.

Theorem \ref{theo:mirropolicy} gives the optimal policy for Mirror-GRPO iterations, and its corresponding PoS recurrence: 

\begin{theorem}[Mirror-GRPO, $\alpha=0$]\label{theo:mirropolicy}
\label{thm:mirror-only}
Fix $\alpha=0$ and a prompt $q$ and let $\beta>0$.
Let $\Omega_\varepsilon(p)= \frac{1}{\sqrt{p(1-p)+\varepsilon}}.$ Then the following holds:
\begin{enumerate}[leftmargin=1.1em, itemsep=4pt, topsep=4pt]

\item\emph{Optimal policy.}
The maximizer of \eqref{eq:mirror-obj1} is
\[ \pi_{n}(o|q)  = \frac{1}{Z_{n-1}(q)}\pi_{n-1}(o|q) \exp \left(\frac{1}{\beta} \left( \omega^+_{\varepsilon}(p_{n-1}(q)) \mathbbm{1}_{r(q,o)=1}  - \omega^-_{\varepsilon}(p_{n-1}(q))   \mathbbm{1}_{r(q,o)=0} \right) \right),  \]
where 
$ Z_{n-1}(q) = p_{n-1}(q) \exp \left(\frac{1}{\beta} \omega^+_{\varepsilon}(p_{n-1}(q))   \right) +  (1-p_{n-1}(q))  \exp \left(- \frac{1}{\beta} \omega^-_{\varepsilon} (p_{n-1} (q)) \right) .$

\item\emph{PoS and odds recurrences.} The PoS of $\pi_{n}(\cdot|q)$, $p_{n}(q)$, satisfies  the following recurrence: 
\begin{align}
\logit(p_n(q))
&=\logit(p_{n-1}(q))\;+\;\frac{\Omega_\varepsilon(p_{n-1}(q))}{\beta},\\
p_n(q)
&=h_{\varepsilon,\beta}(p_{n-1}(q))=\sigma\!\Big(\logit(p_{n-1}(q))+\Omega_\varepsilon(p_{n-1}(q))/\beta\Big).
\label{eq:posmirror}
\end{align}
\end{enumerate}
\end{theorem}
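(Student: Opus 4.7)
\medskip

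\noindent\textbf{Proof plan.}
The optimization problem in \eqref{eq:mirror-obj1} with $\alpha=0$ decouples over prompts, so I fix $q$ and work with the per-prompt subproblem
\[
\max_{\pi(\cdot|q)}\; \mathbb{E}_{o\sim \pi(\cdot|q)}[A_{n-1}(q,o)]\;-\;\beta\,\mathsf{KL}\!\bigl(\pi(\cdot|q)\,\Vert\,\pi_{n-1}(\cdot|q)\bigr),
\]
subject to $\pi(\cdot|q)\in\Delta(\mathcal{O})$. This is the standard KL-regularized reward-maximization problem (the Gibbs variational principle), whose unique solution is the exponential tilt
\[
\pi_n(o|q)\;=\;\frac{1}{Z_{n-1}(q)}\,\pi_{n-1}(o|q)\exp\!\Bigl(\tfrac{1}{\beta}A_{n-1}(q,o)\Bigr),
\qquad
Z_{n-1}(q)=\sum_{o}\pi_{n-1}(o|q)\exp\!\Bigl(\tfrac{1}{\beta}A_{n-1}(q,o)\Bigr).
\]
I can obtain this either by a Lagrangian argument (normalization multiplier, differentiate in $\pi(o|q)$, solve) or by rewriting the objective as $-\beta\,\mathsf{KL}(\pi\,\Vert\,\tilde\pi)+\text{const}$ with $\tilde\pi(o|q)\propto\pi_{n-1}(o|q)e^{A_{n-1}(q,o)/\beta}$ and using non-negativity of $\mathsf{KL}$.

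Next, I substitute the binary structure of the advantage from \eqref{eq:group-const-adv}. Since $A_{n-1}(q,o)$ depends on $o$ only through $r(q,o)\in\{0,1\}$, the sum defining $Z_{n-1}(q)$ splits into the mass that $\pi_{n-1}(\cdot|q)$ places on correct versus incorrect outputs, i.e.\ $p_{n-1}(q)$ and $1-p_{n-1}(q)$:
\[
Z_{n-1}(q)\;=\;p_{n-1}(q)\,\exp\!\Bigl(\tfrac{1}{\beta}\omega^+_\varepsilon(p_{n-1}(q))\Bigr)\;+\;(1-p_{n-1}(q))\,\exp\!\Bigl(-\tfrac{1}{\beta}\omega^-_\varepsilon(p_{n-1}(q))\Bigr),
\]
which is exactly the expression in the theorem. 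This proves part (1).

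For part (2), I compute the new PoS by summing $\pi_n(o|q)$ over $\{o:r(q,o)=1\}$:
\[
p_n(q)=\sum_{o:r(q,o)=1}\pi_n(o|q)=\frac{p_{n-1}(q)\,\exp(\omega^+_\varepsilon(p_{n-1}(q))/\beta)}{Z_{n-1}(q)}.
\]
Dividing by $1-p_n(q)=(1-p_{n-1}(q))\exp(-\omega^-_\varepsilon(p_{n-1}(q))/\beta)/Z_{n-1}(q)$ cancels $Z_{n-1}(q)$ and yields
\[
\frac{p_n(q)}{1-p_n(q)}\;=\;\frac{p_{n-1}(q)}{1-p_{n-1}(q)}\,\exp\!\Bigl(\tfrac{1}{\beta}\bigl(\omega^+_\varepsilon(p_{n-1}(q))+\omega^-_\varepsilon(p_{n-1}(q))\bigr)\Bigr).
\]
Taking logarithms and using $\omega^+_\varepsilon(p)+\omega^-_\varepsilon(p)=\Omega_\varepsilon(p)=(p(1-p)+\varepsilon)^{-1/2}$ (as recorded in Proposition~\ref{pro:propreties}) gives the logit recurrence; applying $\sigma=\logit^{-1}$ recovers the closed form $p_n(q)=h_{\varepsilon,\beta}(p_{n-1}(q))$.

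\medskip
\noindent\textbf{Main obstacle.} There is no deep difficulty here: the argument is essentially a change of reference measure in the Gibbs variational formula followed by a two-term bookkeeping for $Z_{n-1}(q)$. The only point requiring care is making sure the derivation is well-posed, which reduces to the usual non-degeneracy caveats (assume $0<p_{n-1}(q)<1$ so that both $p_{n-1}(q)$ and $1-p_{n-1}(q)$ are strictly positive and $\logit(p_{n-1}(q))$ is finite; the smoothing $\varepsilon>0$ ensures $\Omega_\varepsilon$ is finite even at the boundary, consistent with Remark~1). Compared to the reference-anchored case of Theorem~\ref{theo:IterationProba}, the simplification here is that the tilt is against $\pi_{n-1}$ rather than $\pi_{\mathrm{ref}}$, so $\pi_{\mathrm{ref}}$ and $p_{\mathrm{ref}}$ disappear entirely from the recursion, leaving an autonomous odds update driven solely by $\Omega_\varepsilon/\beta$.
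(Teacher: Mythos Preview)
Your proof is correct and follows essentially the same route as the paper. The paper simply invokes its general Theorem~\ref{theo:general} with the anchor set to $\pi_{n-1}$, and that general theorem is proved exactly as you do here: first-order optimality of the KL-regularized objective gives the exponential tilt, the binary structure of $A_{n-1}$ collapses $Z_{n-1}$ to two terms, and summing over successes and simplifying yields $p_n=\sigma(\logit p_{n-1}+\Omega_\varepsilon(p_{n-1})/\beta)$; your odds-ratio step to cancel $Z_{n-1}$ is a cosmetic variant of the same computation.
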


When compared with Theorem \ref{theo:IterationProba}, we see that $p_{n-1}(q)$, replaces $p_{\mathrm{ref}}(q)$ in the logit inside the sigmoid. 

\begin{theorem}[Monotone Improvement and Absorbing Fixed Points]
Fix a prompt $q$,the PoS iterations $p_n(q)$  of Mirror-GRPO ($\alpha=0$) have the following properties:

\begin{enumerate}[leftmargin=1.1em, itemsep=4pt, topsep=4pt]

\item\textbf{Monotone improvement and absence of interior fixed points.}
For any $p_{n-1}\in(0,1)$, $\Omega_\varepsilon(p_{n-1})/\beta>0$, hence $\logit(p_n)>\logit(p_{n-1})$ and $p_n>p_{n-1}$.
Consequently, the equation $p=\sigma(\logit(p)+\Omega_\varepsilon(p)/\beta)$ has \emph{no} solution in $(0,1)$.
The only fixed points are at the boundary: $p\in\{0,1\}$.

\item\textbf{Convergence and stability.} The fixed points of Mirror-GRPO iterations ($\alpha=0$) satisfy:  
\begin{enumerate}[label=(\alph*), itemsep=3pt]
\item If $p_{\mathrm{ref}}(q)\in(0,1)$, then $(p_n(q))_{n}$ is strictly increasing and bounded by $1$, hence $p_n\uparrow 1$.
\item  If $p_{\mathrm{ref}}(q)\in (0,1)$,  $p^*=1$ is (globally)  stable fixed point: $\lim_{n\to\infty}p_n(q)=1$.
\item If $p_{\mathrm{ref}}(q)=0$ then $p_n(q)= 0$ for all $n$.
% thus $p=0$ is an absorbing boundary fixed point in the support-degenerate case.
%\item (Local stability at $p=1$.) For $h(p):=\sigma(\logit(p)+\Omega_\varepsilon(p)/\beta)$,
%\[
%h'(p)=\sigma'\!\big(\logit(p)+\Omega_\varepsilon(p)/\beta\big)\left(\frac{1}{p(1-p)}+\frac{\Omega'_\varepsilon(p)}{\beta}\right).
%\]
%As $p\uparrow 1$, $\sigma'(\cdot)\to 0$, hence $h'(1)=0$ (stable). Similarly, $h'(0)=0$.
\end{enumerate}
\end{enumerate}
\end{theorem}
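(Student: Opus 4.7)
The plan is to read both parts of the theorem off the logit recurrence from Theorem~\ref{theo:mirropolicy}, $\logit(p_n(q))=\logit(p_{n-1}(q))+\Omega_\varepsilon(p_{n-1}(q))/\beta$, combined with the closed-form Gibbs update for $\pi_n$. The key observation, used everywhere, is that $\varepsilon>0$ makes $\Omega_\varepsilon(p)=(p(1-p)+\varepsilon)^{-1/2}$ a continuous, strictly positive function on all of $[0,1]$ (in particular bounded below by $(1/4+\varepsilon)^{-1/2}>0$).

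For item 1, I would start from $p_{n-1}\in(0,1)$ and note $\Omega_\varepsilon(p_{n-1})/\beta>0$, so the logit strictly increases along the iteration; by strict monotonicity of $\sigma$ this gives $p_n>p_{n-1}$. The absence of interior fixed points is then automatic: any $p\in(0,1)$ satisfying $p=h_{\varepsilon,\beta}(p)$ would force the logit equation to read $\logit(p)=\logit(p)+\Omega_\varepsilon(p)/\beta$, i.e.\ $\Omega_\varepsilon(p)=0$, which is false on $[0,1]$. The boundary cases $p\in\{0,1\}$ I would handle through the explicit policy $\pi_n(o\mid q)=Z_{n-1}(q)^{-1}\pi_{n-1}(o\mid q)\exp(\cdots)$ from the first part of Theorem~\ref{theo:mirropolicy}: since $\pi_n\ll\pi_{n-1}$, the success event $\{o:r(q,o)=1\}$ retains its $\pi_{n-1}$-mass of being null or full, so $p_{n-1}\in\{0,1\}\Rightarrow p_n=p_{n-1}$, making both endpoints absorbing.

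For item 2, I would reduce to an elementary monotone-convergence argument. Under $p_{\mathrm{ref}}(q)\in(0,1)$, induction using item 1 gives a strictly increasing sequence in $(0,1)$ bounded above by $1$, so $p_n\uparrow p_\infty\in(p_{\mathrm{ref}}(q),1]$. Extending $h_{\varepsilon,\beta}$ continuously to $[0,1]$ by $h(0)=0$, $h(1)=1$ (justified because $\logit(p)\to\pm\infty$ as $p\to 0^+,1^-$ dominates the bounded term $\Omega_\varepsilon(p)/\beta$), the limit $p_\infty$ must be a fixed point of the extended map; by item 1 the only fixed point in $[0,1]$ satisfying $p_\infty>0$ is $p^*=1$. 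This establishes (a) and, since the argument never referenced the specific starting value inside $(0,1)$, also the global-stability claim (b). Case (c) is immediate from the absorption of $0$ proved in item 1: $p_0=p_{\mathrm{ref}}(q)=0$ propagates to $p_n=0$ for all $n$ by induction.

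The main friction point, I expect, is not the logit analysis, which is a one-line consequence of $\Omega_\varepsilon>0$, but rather the careful treatment of the boundary $\{0,1\}$: one must argue at the measure-theoretic level, via $\pi_n\ll\pi_{n-1}$, that these points are truly absorbing rather than merely formal limits of the logit expression, since otherwise the continuity-extension step used to land the sequence at $p^*=1$ would be circular. Flagging absolute continuity from the Gibbs form early is what makes the rest of the argument a clean monotone-convergence exercise.
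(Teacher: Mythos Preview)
Your proposal is correct and follows essentially the same route as the paper: both read the monotone improvement and absence of interior fixed points directly off the logit recurrence via $\Omega_\varepsilon>0$, and then conclude $p_n\uparrow 1$ by monotone convergence plus the fixed-point classification. Your treatment of the boundary $\{0,1\}$ through absolute continuity $\pi_n\ll\pi_{n-1}$ is slightly more careful than the paper's, which instead verifies $h_{\varepsilon,\beta}(0)=0$ and $h_{\varepsilon,\beta}(1)=1$ by the formal limits $\sigma(\logit(0)+\mathrm{finite})=\sigma(-\infty)=0$ and $\sigma(\logit(1)+\mathrm{finite})=\sigma(+\infty)=1$; either justification closes the argument.
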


When compared with Theorem \ref{theo:probaAmplification}, we see for non-zero $p_{\mathrm{ref}}(q)$, Mirror-GRPO iterations of probability of success converges to $1$ that is a stable fixed point, whereas for GRPO with only reference regularization we may have an interior fixed point $p^*(q)> p_{\mathrm{ref}}(q)$. In both case for zero $p_{\mathrm{ref}}(q)$, GRPO with reference regularization or Mirror GRPO don't create successes, and the fixed point success remains at zero. From a practical point of view removing the reference regularization is convenient as one does not need to keep in memory the reference model in addition to the current model . In addition it has more favorable PoS guarantees than reference regularization only. Nevertheless in many situations one wants to achieves good performance on a task via RL training while maintaining the knowledge of the reference model and hence the case $\alpha>0$ is also of interest, we study this case fully in  Appendix \ref{sec:two-kl}. The main takeaway in that scenario where interpolate between $\alpha=0$ and $\alpha=1$, is that we lose monotonic improvement.  The PoS iteration incurs what we call  a Rényi correction  that  encodes the mismatches in success and failures between the reference and the previous iteration and we are back to an interior fixed point in $(0,1)$ and no guarantees of global stability as in the mirror-GRPO case.  

\section{Dr. GRPO and mean-only Normalization}\label{sec:meanonly}

We turn now to another reward normalization proposed in Dr. GRPO \citep{liu2025understanding}. \citet{liu2025understanding} suggests to use a mean-only normalization in GRPO. In our notations this corresponds to the following reward calibration: 

\begin{equation}
\label{eq:cases}
A_{\pi_{\theta_{\text{old}}}}(q,o)=
\begin{cases}
+\omega^+(p(q)), & r(q,o)=1,\\
-\omega^-(p(q)), & r(q,o)=0,
\end{cases}
\quad
\omega^+(p)=1-p,
\qquad
\omega^-(p)=p.
\end{equation}

This results in the following (no clipping) Dr. GRPO iterations for PoS: 
\[ \logit(p_{n}(q)) = \logit (p_{\mathrm{ref}}(q)) + \frac{1}{\beta}.\]
and the following for Mirror Dr. GRPO
\[ \logit(p_{n}(q)) = \logit (p_{n-1}(q)) + \frac{1}{\beta}.\]

These expressions can be obtained applying Theorem \ref{theo:general} in Appendix \ref{sec:generalapp} for this particular weighting with $\Omega(p)=1$. 

When Compared with (no) clip GRPO, DR. GRPO has a trivial constant fixed point $p^*(q)=\sigma(\logit p_{\mathrm{ref}}(q)+\frac{1}{\beta})$. While for Mirror Dr. GRPO , $L_{n}(q)= \logit( p_{n}(q))$ is an arithmetic progression and $L_{n}(q)= L_{\mathrm{ref}}(q)+ \frac{n}{\beta}$ and $p_{n}(q)\uparrow 1$ for non degenerate $p_{\mathrm{ref}}(q)\in (0,1)$. Comparing to Mirror GRPO we have a similar convergence to a PoS of 1 but the iteration are adaptive in the case of Mirror GRPO: 
\[ \logit(p_{n}(q)) = \logit (p_{n-1}(q)) + \frac{\Omega_{\varepsilon}(p_{n-1}(q))}{\beta }= \logit (p_{n-1}(q)) + \frac{1}{\beta (\sigma^2_{n-1}(q)+ \varepsilon)^{\frac{1}{2}}},\]
we can think that the variance normalization corresponds to mean-only normalization with an adpative effective $\beta_{\text{eff}}= \beta \sqrt{\sigma^2_{n-1}(q)+ \varepsilon}$. For low variance we make large increments in the logits of PoS and for high variance, we make smaller increments in the logits of PoS.  

\section{Discussion and Conclusion}
Table \ref{tab:Summary} in the Appendix summarizes different flavors of  GRPO  we studied in this paper and their corresponding probability of success iterations. 

The main dimensions these variants differ on are:  1) the reward calibration: mean and variance normalization as in the original GRPO or mean-only normalization  as in Dr GRPO \citep{liu2025understanding}. Our theory showed that the normalization results in different weighting schemes, non linear in the PoS for GRPO and linear in the PoS for Dr GRPO.  2)  As discussed earlier the analysis of the PPO style  clipping to maintain the policy updates in the vicinity of the old policy is challenging and it has been shown to be more stable to use mirror policy descent to train LLMs with RL \citep{gunter2024appleintelligencefoundationlanguage}. Hence we distinguish GRPO variants also with respect to the anchor distribution on which the KL regularization is applied : no-clip refers to $\pi_{\mathrm{ref}}$ regularization only. Mirror corresponds to the KL regularization given in \eqref{eq:mirror-obj1} with respect to the previous iterate ($\alpha=0$), we also consider the regularization to both reference and previous iteration (two-KL) for $\alpha>0$.  For $\alpha=0$, we see that we obtain a monotonic improvement in the PoS whereas mixing the reference and the previous iterate in the iterations  does not guarantee monotonic improvement. The PoS iteration in this case depends on the mismatch in success and failures between the reference and the previous iteration that we quantify in Appendix \ref{sec:two-kl} via a Rényi correction.

\iffalse
\begin{enumerate}
\item mean-only normalization ensure constant increments in the PoS while mean/variance normalization leads to adaptive increments that are modulated by the variance. 
\item Mirror descent using only the previous iterate is the only case where we have monotonic improvement and convergence to PoS of 1 for non degenerate $p_{\mathrm{ref}} $.
\item \item In all cases GRPO does not create successes and 0 is an absorbing fixed point if $p_{\mathrm{ref}}(q)=0$.
\end{enumerate}
\fi
From a practical point of view Table \ref{tab:Summary} suggests the following in using GRPO in training LLMs:

\begin{takeaway}[Practical Takeaways]
\begin{lessons}
 \item    \textbf{Normalization equivalence.} The mean{+}variance normalization in GRPO  is equivalent from PoS point of view  to mean-only normalization using an adaptive  KL regularization $\beta_{\mathrm{eff}}= \beta \sigma(q)$. One can use either a fixed $\beta$ and get constant increments in log PoS odds  via mean-only calibration, or use mean calibration with $\beta_{\mathrm{eff}}$ as a KL regularizer which results in adaptive increments that are equivalent to mean + variance normalization without having to divide by the variance in the advantage. 
\item \textbf{Mirror versus Clipping and Reference Mixing} Mirror GRPO (KL to previous iteration only ) instead of clipped GRPO  guarantees monotonic improvement  and convergence to PoS of 1 for non degenerate $p_{\mathrm{ref}} $. Mirror GRPO has  the best theoretical and practical guarantees. Adding the reference regularization to this mirror descent results in an internal fixed point and no monotonic improvement is guaranteed.  Practically speaking, keeping a reference policy in memory increases bandwidth/latency and can slow training for large models.  
 \item  \textbf{Coverage and exploration.}   In all cases GRPO does not create successes and 0 is an absorbing fixed point if $p_{\mathrm{ref}}(q)=0$. Hence it is important to maintain successes   exploration (e.g., temperature, entropy bonus, or data mixing) so successes have nonzero support. 
 \end{lessons}
\end{takeaway}

\appendix

\bibliographystyle{abbrvnat}
\bibliography{iclr2024_conference,rlhf} 

%%%%%%%%%%%%%%%%%%%%%%%%%%%%%%%%%%%%%%%%%%%%%%%%%%%%%%%%%%%%
\newpage
\appendix
\section*{Summary}
\begin{table}[ht!]
\scriptsize
\setlength{\tabcolsep}{4pt}
\renewcommand\arraystretch{1.12}
\centering
\begin{tabularx}{\linewidth}{l X X X}
\toprule
Method & Weights ($w^+,\,w^-,\,\Omega$) & Recurrence PoS (logit) & Fixed Point (FP)/ Stability \\
\midrule

\makecell[l]{(No-clip)\\ GRPO\\ (mean-var )} &
\makecell[l]{
$w^+=(1-p_{n-1})\,\Omega_\varepsilon(p_{n-1})$\\
$w^-=p_{n-1}\,\Omega_\varepsilon(p_{n-1})$\\
$\Omega_{\varepsilon}(p) = (p(1-p)+ \varepsilon)^{-\frac{1}{2}} $
} &
\makecell[l]{
$\logit p_n=\logit p_{\mathrm{ref}}$\\
$\quad+\;\Omega_\varepsilon(p_{n-1})/\beta$
} &
\makecell[l]{
$p^\star=\sigma\!\big(\logit p_{\mathrm{ref}}+\Omega_\varepsilon(p^\star)/\beta\big)$ (implicit).\\
Converges if $h(p)=\sigma(\logit p_{\mathrm{ref}}+\Omega_\varepsilon(p)/\beta)$\\
is a contraction: $\sup_p|h'(p)|<1$.
} \\

\addlinespace[2pt]
\makecell[l]{(No-clip)\\ Dr.\ GRPO\\ (mean-only)} &
\makecell[l]{
$w^+=1-p_{n-1}$,\quad $w^-=p_{n-1}$\\
$\Omega=1$
} &
\makecell[l]{
$\logit p_n=\logit p_{\mathrm{ref}}$\\
$\quad+\;1/\beta$
} &
\makecell[l]{
$p^\star=\sigma(\logit p_{\mathrm{ref}}+1/\beta)$ (one step).\\
Trivially stable under re-application.
} \\

\addlinespace[2pt]
\makecell[l]{Mirror\\ GRPO\\ (mean-var )} &
\makecell[l]{
$w^+=(1-p_{n-1})\,\Omega_\varepsilon(p_{n-1})$\\
$w^-=p_{n-1}\,\Omega_\varepsilon(p_{n-1})$\\
$\Omega(p)=(p(1-p)+\varepsilon)^{-\frac{1}{2}}$
} &
\makecell[l]{
%$\dfrac{p_n}{1-p_n}=\dfrac{p_{n-1}}{1-p_{n-1}}$\\
%$\quad\times\exp(\Omega_\varepsilon(p_{n-1})/\beta)$\\
$\logit p_n=\logit p_{n-1}$\\
$\quad+\;\Omega_\varepsilon(p_{n-1})/\beta$
} &
\makecell[l]{
No interior FP; $p_n\uparrow 1$ (non-degenerate).\\
$p{=}1$ global; $p{=}0$ absorbing only\\
 if success support $=0$.
} \\

\addlinespace[2pt]
\makecell[l]{Mirror\\ Dr.\ GRPO\\ (mean-only)} &
\makecell[l]{
$w^+=1-p_{n-1}$,\quad $w^-=p_{n-1}$\\
$\Omega=1$
} &
\makecell[l]{
%$\dfrac{p_n}{1-p_n}=\dfrac{p_{n-1}}{1-p_{n-1}}\,e^{1/\beta}$\\
$ \logit p_n=\logit p_{n-1}$\\
$\quad+\;1/\beta$
} &
\makecell[l]{
No interior FP; $p_n\uparrow 1$.\\
$p{=}1$ global (non-degenerate starts).
} \\

\addlinespace[2pt]
\makecell[l]{Mirror GRPO\\ + $\pi_{\mathrm{ref}}$\\ (two-KL,\ mean+var)} &
\makecell[l]{
$w^+=(1-p_{n-1})\,\Omega_\varepsilon(p_{n-1})$\\
$w^-=p_{n-1}\,\Omega_\varepsilon(p_{n-1})$\\
$\Omega=\Omega_\varepsilon(p_{n-1})$
} &
\makecell[l]{
$\logit p_n=\alpha\,\logit p_{\mathrm{ref}}$\\
$\quad+(1-\alpha)\,\logit p_{n-1}$\\
$\quad+\;\Delta_R(q)+\Omega_\varepsilon(p_{n-1})/\beta$
} &
\makecell[l]{
$\logit p^\star=\logit p_{\mathrm{ref}}+\dfrac{\Delta_R^\star}{\alpha}$\\
$\quad+\;\dfrac{\Omega_\varepsilon(p^*)}{\alpha\beta}$ (if finite).\\
Affine contraction in log-odds if $\Delta_R$ bounded;\\
per-step monotonicity not guaranteed.
} \\

\addlinespace[2pt]
\makecell[l]{Mirror Dr.\ GRPO\\ + $\pi_{\mathrm{ref}}$\\ (two-KL,\ mean)} &
\makecell[l]{
$w^+=1-p_{n-1}$,\quad $w^-=p_{n-1}$\\
$\Omega=1$
} &
\makecell[l]{
$\logit p_n=\alpha\,\logit p_{\mathrm{ref}}$\\
$\quad+(1-\alpha)\,\logit p_{n-1}$\\
$\quad+\;\Delta_R(q)+1/\beta$
} &
\makecell[l]{
$\logit p^\star=\logit p_{\mathrm{ref}}+\dfrac{\Delta_R^\star}{\alpha}$\\
$\quad+\;\dfrac{1}{\alpha\beta}$ (unique FP).\\
Contraction in log-odds with rate $(1-\alpha)$;\\
$p^\star>p_{\mathrm{ref}}$ if $\Delta_R^\star+1/\beta>0$.
} \\

\bottomrule
\end{tabularx}
\caption{GRPO variants with fixed $\beta$ and mixed penalty $\beta\big[\alpha\,\mathrm{KL}(\pi\|\pi_{\mathrm{ref}})+(1-\alpha)\,\mathrm{KL}(\pi\|\pi_{n-1})\big]$.}
\label{tab:Summary}
\end{table}

\section{Experimental Validation}\label{Sec:Exp}

\noindent \textbf{Setup} We use  the \href{https://huggingface.co/datasets/openai/gsm8k}{\texttt{GSM8K}} dataset from \cite{Cobbe2021} (MIT license), and\\ {\texttt{Qwen/Qwen2.5-0.5B-Instruct}} (Apache 2.0 license) by \cite{Yang2024d} as the reference policy. We use GRPO implementation in   TRL \citep{vonwerra2022trl}, and train  on  the training split of \texttt{GSM8K} on a node with 8 GPUs (GPU0 for the vLLM server and 7 other GPUs for distributed training). We use a learning rate $5e^{-6}$, clipping $\varepsilon=0.2$  and the KL regularizer $\beta=0.1$, and $\mu$ in Algorithm \ref{alg:iter-grpo} is set to $\mu=10$. Other hyperparameters are given in Appendix \ref{app:assets} . We use the correctness of the LLM output as a reward.\\ 

\noindent \textbf{Success Rate Amplification} The success rate of the policy is then evaluated on the test set consisting of 1319 math questions, where for each question the success rate is evaluated using $50$ samples.   We see a success rate amplification from $\pi_{\mathrm{ref}}$ originally (averaged on all prompts) at $21\%$ to $37.5\%$ at the end of the GRPO epoch.\\

\noindent \textbf{Trajectory of Success rates Along GRPO Iterations} We randomly select few prompts from GSM8K test set and plot in Figure \ref{fig:traj} the trajectory of the success rate of the model along the GRPO iteration (estimated from 50 samples from the model for each prompt). The success rate is computed from checkpoints of the model along the GRPO training.  We see that the trajectory of the success rate $p(q)$ resembles the trajectory of a fixed point algorithm (see Figure \ref{fig:iterationConvergence} in Appendix  \ref{app:plots} ). For some points the convergence is fast to the limit point $p^*=1$, for others we see an oscillatory behavior (similar to the one in  last row in Figure \ref{fig:iterationConvergence}). Interestingly when $p_{\mathrm{ref}}=0$, the probability of success does not move much along GRPO iterations as predicted by our theory.

\begin{figure}[htbp]
  \centering
  % First subfigure
 % \begin{subfigure}[b]{0.45\textwidth}
    \centering
    \includegraphics[width=0.5\linewidth]{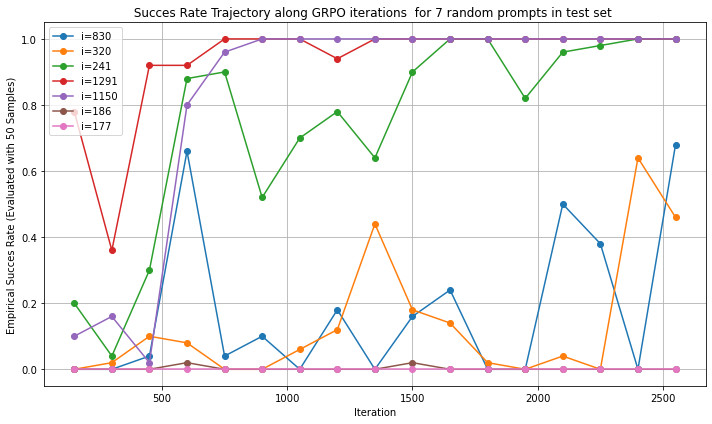}
    \caption{Success rate trajectory of the model on randomly selected prompts along the GRPO iters.}
    \label{fig:traj}
  %\end{subfigure}
 % \hfill
  % Second subfigure
 % \begin{subfigure}[b]{0.45\textwidth}
  %  \centering
  %  \includegraphics[width=\linewidth]{}
  %  \caption{Convergence to fixed points}
  %  \label{fig:convergence}
  %\end{subfigure}
  %\caption{Fixed Points like behavior of success rate along GRPO iterations.}
  %\label{fig:main}
\end{figure}

\section{Algorithm}\label{sec:alg}
 \begin{algorithm}[!ht]
  \small
  \caption{Iterative GRPO with verifiable rewards, modified from  \citep{shao2024deepseekmath}}

  \begin{algorithmic}[1]
\State  \textbf{Input} initial policy model $\pi_{\theta_{\text{init}}}$; verifiable reward $r$; task prompts $\mathcal{D}$; 
  hyperparameters $\epsilon$, $\beta$, $\mu$
    \State policy model $\pi_\theta \leftarrow \pi_{\theta_{\text{init}}}$
  %  \For{iteration = 1, \dots, I}
    %   \State reference model $\pi_{\mathrm{ref}} \leftarrow \pi_{\theta}$
      \For{$n = 1, \dots, M$}
      \State Sample a batch $\mathcal{D}_b$ from $\rho_{\mathcal{Q}}$
      \State Update the old policy model $\pi_{\theta_{\textrm{old}}} \leftarrow \pi_{\theta}$ 
      \State Sample $G$ outputs $\{o_i\}_{i=1}^G \sim \pi_{\theta_{\textrm{old}}} (\cdot \mid q) $ for each question $q \in \mathcal{D}_b$
      \State Compute rewards $\{r_i\}_{i=1}^{G}$ for each sampled output $o_i$ by running verifiable reward $r$ 
      \State Compute $\hat{A}(q,o_i)$ using  \eqref{eq:casesmeanvar}, where $\hat{p}(q) = \hat{p}_{\theta_{\text{old}}}(q) = \frac{1}{G} \sum_{i=1}^G \mathbbm{1}_{r(q,o_i)=1}$
      \For{GRPO iteration = 1, \dots, $\mu$}
        \State Update the policy model $\pi_{\theta}$ by maximizing GRPO objective with gradient ascent 
      \EndFor
    \EndFor 
 %   \State Update $r_\phi$ through continuous training using a replay mechanism. 
  %  \EndFor 
    \State \noindent   \textbf{Output} $\pi_\theta$
  \end{algorithmic}
  \label{alg:iter-grpo}
\end{algorithm}

\newpage
\newpage

\section{(No-Clipping ) GRPO : Proofs of Section \ref{sec:Recursion}}

\begin{proof} [Proof of Theorem \ref{theo:piexpression}] 

The objective in  Equation \eqref{eq:GRPO_iterations} is concave and hence setting the first order optimality conditions (See for example \citep{mroueh2024informationtheoreticguaranteespolicy} ) we obtain:
\[ \pi_{n}(o|q)  = \frac{1}{Z_{n-1}(q)}\pi_{\mathrm{ref}}(o|q) \exp \left(\frac{1}{\beta} \left( \omega^+_{\varepsilon}(p_{n-1}(q)) \mathbbm{1}_{r(q,o)=1}  - \omega^-_{\varepsilon}(p_{n-1}(q))   \mathbbm{1}_{r(q,o)=0} \right) \right),  \]

where 
\begin{align*}
Z_{n-1}(q) &= \int d\pi_{\mathrm{ref}}(o|q) \exp \left(\frac{1}{\beta} \left( \omega^+_{\varepsilon}(p_{n-1}(q)) \mathbbm{1}_{r(q,o)=1}  - \omega^-_{\varepsilon}(p_{n-1}(q))   \mathbbm{1}_{r(q,o)=0} \right) \right) \\
& = \mathbb{E}_{o \sim \pi_{\mathrm{ref}}(\cdot| q) } \mathbbm{1}_{r(q,o)=1} \exp \left(\frac{1}{\beta} \left( \omega^+_{\varepsilon}(p_{n-1}(q)) \mathbbm{1}_{r(q,o)=1}  - \omega^-_{\varepsilon}(p_{n-1}(q))   \mathbbm{1}_{r(q,o)=0} \right) \right)\\
& + \mathbb{E}_{o \sim \pi_{\mathrm{ref}}(\cdot| q) } \mathbbm{1}_{r(q,o)=0}  \exp \left(\frac{1}{\beta} \left( \omega^+_{\varepsilon}(p_{n-1}(q)) \mathbbm{1}_{r(q,o)=1}  - \omega^-_{\varepsilon}(p_{n-1}(q))   \mathbbm{1}_{r(q,o)=0} \right) \right)\\
& =\exp \left(\frac{1}{\beta} \omega^+_{\varepsilon}(p_{n-1}(q))\right)  \mathbb{E}_{o \sim \pi_{\mathrm{ref}}(\cdot| q) } \mathbbm{1}_{r(q,o)=1} + \exp \left(-\frac{1}{\beta} \omega^-_{\varepsilon}(p_{n-1}(q))\right) \mathbb{E}_{o \sim \pi_{\mathrm{ref}}(\cdot| q) } \mathbbm{1}_{r(q,o)=0} \\
& = p_{\mathrm{ref}}(q)\exp \left(\frac{1}{\beta} \omega^+_{\varepsilon}(p_{n-1}(q))\right) + (1-p_{\mathrm{ref}}(q ))  \exp \left(-\frac{1}{\beta} \omega^-_{\varepsilon}(p_{n-1}(q))\right) ,
\end{align*}
where $$ p_{\mathrm{ref}}(q) =p_0(q) =\mathbb{E}_{o \sim \pi_{\mathrm{ref}} (\cdot | q) } \mathbbm{1}_{r(q,o)=1}.$$

\end{proof}

\begin{proof} [Proof of Theorem \ref{theo:IterationProba}] Replacing $\pi_n(\cdot|q)$ by its expression from Theorem \ref{theo:piexpression} we have:
\begin{align*}
p_{n}(q) & = \mathbb{E}_{ o\sim \pi_{n}(.|q)} \mathbbm{1}_{r(q,o)=1}\\
& = \frac{1}{Z_{n-1}(q)} \int d\pi_{\mathrm{ref}}(o|q) \exp \left(\frac{1}{\beta} \left( \omega^+_{\varepsilon}(p_{n-1}(q)) \mathbbm{1}_{r(q,o)=1}  - \omega^-_{\varepsilon}(p_{n-1}(q))   \mathbbm{1}_{r(q,o)=0} \right) \right) \mathbbm{1}_{r(q,o)=1}  \\
%& = \frac{1}{Z_{n-1}(q)} \mathbb{E}_{\pi_{\mathrm{ref}}}   \exp (\frac{1}{\beta} \left(  \sqrt{\frac{1-p_{n-1} (q)}{p_{n-1} (q)}}  \mathbbm{1}_{r(q,o)=1}   -  \sqrt{\frac{p_{n-1} (q)}{1-p_{n-1 (q)}}   \mathbbm{1}_{r(q,o)=0} \right) ) \mathbbm{1}_{r(q,o)=1}\\
& = \frac{1}{Z_{n-1}(q)} \exp\left( \frac{1}{\beta} \omega^+_{\varepsilon}(p_{n-1} (q)) \right) \mathbb{E}_{\pi_{\mathrm{ref}}} \mathbbm{1}_{r(q,o)=1}\\
& = \frac{p_{\mathrm{ref}} (q) \exp\left( \frac{1}{\beta} \omega^+_{\varepsilon}(p_{n-1} (q)) \right)}{Z_{n-1}(q)}\\
& = \frac{p_{\mathrm{ref}} (q) \exp\left( \frac{1}{\beta} \omega^+_{\varepsilon}(p_{n-1} (q)) \right)}{ p_{\mathrm{ref}}(q)\exp \left(\frac{1}{\beta} \omega^+_{\varepsilon}(p_{n-1}(q))\right) + (1-p_{\mathrm{ref}}(q ))  \exp \left(-\frac{1}{\beta} \omega^-_{\varepsilon}(p_{n-1}(q))\right) }
\end{align*}
Replacing the weights expressions from \eqref{eq:cases} we obtain: 
\begin{equation}
p_{n}(q) = \frac{p_{\mathrm{ref}}\exp\left (\frac{1}{\beta} \left( \frac{1-p_{n-1} (q)}{\sqrt{p_{n-1}(q) (1-p_{n-1}(q)) + \varepsilon}} \right )\right) }{ p_{\mathrm{ref}}\exp \frac{1}{\beta} \left( \frac{1-p_{n-1}(1)}{\sqrt{p_{n-1}(q) (1-p_{n-1}(q)) + \varepsilon}} \right)   +  (1-p_{\mathrm{ref}})  \exp \frac{1}{\beta} \left( - \frac{p_{n-1}(q)}{\sqrt{p_{n-1}(q)(1-p_{n-1}(q)) + \varepsilon}} \right)  }
\end{equation}

Define \[ h_{\varepsilon,p_{\mathrm{ref}}}(p) = \frac{p_{\mathrm{ref}} \exp\left (\frac{1}{\beta} \left( \frac{1-p}{\sqrt{p (1-p) + \varepsilon}} \right )\right) }{ p_{\mathrm{ref}}\exp \frac{1}{\beta} \left( \frac{1-p}{\sqrt{p (1-p) + \varepsilon}} \right)   +  (1-p_{\mathrm{ref}})  \exp \frac{1}{\beta} \left( - \frac{p}{\sqrt{p(1-p) + \varepsilon}} \right)  }
\]

\noindent We see therefore that GRPO's probability of success satisfies the following iteration : 

\[ p_{n} (q) = h_{\varepsilon,p_{\mathrm{ref}}}(p_{n-1}(q)) . \]

\noindent We assume here that $0<p_{\mathrm{ref}}<1$.We can simplify $h_{\varepsilon} (p)$ as follows:

\begin{align*}
h_{\varepsilon,p_{\mathrm{ref}}}(p)  &= \frac{1}{1+ \frac{1-p_{\mathrm{ref}}}{p_{\mathrm{ref}}} \exp \frac{1}{\beta} \left(  \frac{-p }{\sqrt{p(1-p) + \varepsilon}} -  \frac{1-p }{\sqrt{p(1-p) + \varepsilon}} \right)  }\\
& = \frac{1}{1+ \frac{1-p_{\mathrm{ref}}}{p_{\mathrm{ref}}} \exp \left(-\frac{1}{\beta}  \frac{1}{\sqrt{p(1-p) + \varepsilon}}\right) }.
\end{align*}
\end{proof}

\begin{proof}[Proof of Proposition \ref{pro:propreties}]

\noindent \textbf{Existence of fixed points} For $\varepsilon >0$ $h_{\varepsilon,p_{\mathrm{ref}}}$ is continuous function from \([0,1]\) to \([0,1]\)  and hence by Brouwer's Fixed Point Theorem at least a fixed point $p^*$ exists in $[0,1]$, i.e $\exists p^* \in [0,1]$ such that $p^* = h_{\varepsilon, p_{\mathrm{ref}}} (p^*)$.\\

\noindent \textbf{Monotonicity} Let $\sigma(z) = \frac{1}{1+ \exp(-z)}$ and let $A = \frac{1-p_{\mathrm{ref}}}{p_{\mathrm{ref}}}  $ and $B(p) = \frac{1}{\beta}  \frac{1}{\sqrt{p(1-p) + \varepsilon}}$ hence we have:

\[ h_{\varepsilon, p_{\mathrm{ref}}} (p) = \sigma\left( z(p) \right)\]

\noindent where $$z(p) = - \log(A) + B(p)  $$
we have $$z'(p) =   B'(p) = -\frac{1-2p}{2\beta\,[p(1-p)+\varepsilon]^{3/2}}  $$

\noindent Let us compute the derivative : 
\begin{align*}
h'_{\varepsilon,p_{\mathrm{ref}}}(p) & = \sigma(z(p)) (1-\sigma(z(p))) z'(p)\\
& = - \sigma(z(p)) (1-\sigma(z(p)))  \frac{1-2p}{2\beta\,[p(1-p)+\varepsilon]^{3/2}}
\end{align*}

\begin{itemize}
\item if $p<\frac{1}{2}$, $h'_{\varepsilon,p_{\mathrm{ref}}}(p)<0 $  and $h_{\varepsilon,p_{\mathrm{ref}}}$ is decreasing 
\item if $p>\frac{1}{2}$ $h'_{\varepsilon,p_{\mathrm{ref}}}(p)>0 $  and $h_{\varepsilon,p_{\mathrm{ref}}}$ is increasing 

\item if $p= \frac{1}{2}$ $h'_{\varepsilon,p_{\mathrm{ref}}}(p)= 0 $ 
\end{itemize}
Turning to third point:
\begin{align*}
 h_{\varepsilon, p_{\mathrm{ref}}} (p) &= \sigma\left( z(p) \right)\\
 &= \sigma \left( \log \frac{p_{\mathrm{ref}}}{1-p_{\mathrm{ref} }} + \frac{1}{\beta}  \frac{1}{\sqrt{p(1-p) + \varepsilon}} \right)\\
 & = \sigma\left(\logit (p_{\mathrm{ref}})  + \frac{1}{\beta}  \frac{1}{\sqrt{p(1-p) + \varepsilon}}\right)
 \end{align*}

and hence:
\begin{equation}
    \logit \left( h_{\varepsilon, p_{\mathrm{ref}}} (p)\right) =\logit (p_{\mathrm{ref}})  + \frac{1}{\beta}  \frac{1}{\sqrt{p(1-p) + \varepsilon}}. 
\end{equation}

\end{proof}

\subsection{Proofs of Section \ref{sec:amplificationConvergence} }

\begin{proof}[Proof of Theorem \ref{theo:probaAmplification}]
We claim that any fixed point \(p^*\) of \(h_\varepsilon\) satisfies
\[
p^*>p_{\mathrm{ref}}.
\]
%This is seen by considering the function
%\[
%f(p)=h_\varepsilon(p)-p,
%\]
%whose zeros correspond to fixed points. 	

We have for all $\beta,\varepsilon >0$ $\exp\!\left(-\frac{1}{\beta}\frac{1}{\sqrt{p_{\mathrm{ref}}(1-p_{\mathrm{ref}})+\varepsilon}}\right) <1$.
\begin{align*}
h_{\varepsilon,p_{\mathrm{ref}}}(p)-p_{\mathrm{ref}}&=\frac{1}{1+\frac{1-p_{\mathrm{ref}}}{p_{\mathrm{ref}}}\,\exp\!\left(-\frac{1}{\beta}\frac{1}{\sqrt{p(1-p)+\varepsilon}}\right)}- p_{\mathrm{ref}}\\
&>\frac{1}{1+\frac{1-p_{\mathrm{ref}}}{p_{\mathrm{ref}}}} -p_{\mathrm{ref}}\\
& = p_{\mathrm{ref}}- p_{\mathrm{ref}}\\
& =0.
\end{align*}
Hence for any fixed point we have $h_{\varepsilon,p_{\mathrm{ref}}}(p^*)=p^*$ and we have $p^*>p_{\mathrm{ref}}.$
\end{proof}
\subsection{Stability fixed Point GRPO with Reference Only Regularization } \label{sec:refonly}
We drop in the sequel the dependency on $q$ to simplify notations and turn to the second question regarding the convergence of the GRPO sequence given in \eqref{eq:fixedIteration} to a fixed point \( p^* \) of \( h_{\varepsilon, p_{\mathrm{ref}}} \). Given the properties of \( h_{\varepsilon, p_{\mathrm{ref}}} \), we can characterize the limit point of the GRPO iteration as \( n \to \infty \) as follows, as a consequence of the local Banach fixed-point theorem:

\begin{theorem}[Local Fixed Point Convergence] Let $p^*$ be a fixed point of $h_{\varepsilon,p_{\mathrm{ref}}}$and assume that have $|h'_{\varepsilon,p_{\mathrm{ref}}}(p^*)|<1$.Given that $h_{\varepsilon,p_{\mathrm{ref}}}$ and $h'_{\varepsilon,p_{\mathrm{ref}}}$ are continuous in $[0,1]$, then there exists $\delta>0$ such the iteration $p_{n} = h_{\varepsilon,p_{\mathrm{ref}}}(p_{n-1})$ converges to $p^*$, if $p_0 =p_{\mathrm{ref}} \in [p^*-\delta, p^*+\delta]$. In other words under this condition we have: 
$$\lim _{n\to \infty} p_{n}  = p^*.$$
\label{theo:convergence}
\end{theorem}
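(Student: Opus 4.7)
The plan is to prove this as a direct application of the contraction mapping argument in a neighborhood of $p^*$, using only the hypothesis $|h'_{\varepsilon,p_{\mathrm{ref}}}(p^*)|<1$ together with the continuity of $h'_{\varepsilon,p_{\mathrm{ref}}}$ asserted in Proposition \ref{pro:propreties}. Throughout the proof I will write $h$ for $h_{\varepsilon,p_{\mathrm{ref}}}$ to lighten notation.

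First I would pick a contraction factor $L\in(|h'(p^*)|,1)$. By continuity of $h'$ at $p^*$, there exists $\delta_0>0$ such that $|h'(p)|\le L$ for every $p\in I_{\delta_0}:=[p^*-\delta_0,p^*+\delta_0]\cap[0,1]$. I would then shrink $\delta_0$ if necessary so that the interval $I_\delta=[p^*-\delta,p^*+\delta]$ lies entirely inside $[0,1]$, which is possible because $p^*\in[0,1]$ (and if $p^*\in\{0,1\}$ one takes the one-sided interval; note however that the interesting regime is $p^*\in(0,1)$ where the sub-unit derivative hypothesis is meaningful).

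Next I would show that $I_\delta$ is forward invariant under $h$ and that $h|_{I_\delta}$ is $L$-Lipschitz. Given $p\in I_\delta$, the mean value theorem yields some $\xi$ between $p$ and $p^*$ with
\[
|h(p)-p^*|=|h(p)-h(p^*)|=|h'(\xi)|\,|p-p^*|\le L\,|p-p^*|\le L\delta<\delta,
\]
so $h(I_\delta)\subseteq I_{L\delta}\subset I_\delta$. Iterating this bound from the initial condition $p_0=p_{\mathrm{ref}}\in I_\delta$ gives, by an immediate induction,
\[
|p_n-p^*|\le L^n\,|p_0-p^*|\le L^n\delta.
\]
Since $L<1$, the right-hand side tends to $0$, hence $p_n\to p^*$.

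The argument is essentially the local Banach fixed-point theorem on the compact interval $I_\delta$, so there is no serious obstacle. The only two points requiring mild care are (i) ensuring the neighborhood remains inside the domain $[0,1]$ so that the iteration is well-defined and $h'$ can be evaluated, and (ii) recognizing that the continuity of $h'$ on $[0,1]$, guaranteed by the smoothing parameter $\varepsilon>0$ (as noted in the remark following Theorem \ref{theo:IterationProba}, $h$ fails to be continuous at the endpoints when $\varepsilon=0$), is exactly what allows the local Lipschitz constant $L$ to be transferred from the single point $p^*$ to a whole neighborhood. Both points are handled by the small-$\delta$ choice above.
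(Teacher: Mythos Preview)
Your proof is correct and takes essentially the same approach as the paper: the paper's proof simply invokes the local Banach fixed-point theorem (stating it as a black box), and your argument spells out precisely that contraction-mapping reasoning in detail. If anything, your version is more explicit than the paper's about choosing the Lipschitz constant $L$, obtaining forward invariance via the mean value theorem, and deducing the geometric rate $|p_n-p^*|\le L^n\delta$.
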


\begin{lemma} Let $p^*$ be a fixed point: $p^*= h_{\varepsilon,p_{\mathrm{ref}}} (p^*)$, then we have:
\begin{align*}
h'_{\varepsilon,p_{\mathrm{ref}}}(p^*)&= - h_{\varepsilon, p_{\mathrm{ref}}}(p^*) (1-h_{\varepsilon, p_{\mathrm{ref}}}(p^*) ) \frac{1-2p^*}{2\beta\,[p^*(1-p^*)+\varepsilon]^{3/2}}\\
&=  p^* (1-p^*) \frac{2p^*-1}{2\beta\,[p^*(1-p^*)+\varepsilon]^{3/2}}\
\end{align*}
\end{lemma}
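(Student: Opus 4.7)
The plan is to apply the derivative formula established in the monotonicity item of Proposition~\ref{pro:propreties}, which gives for every $p \in [0,1]$
\[
h'_{\varepsilon,p_{\mathrm{ref}}}(p) \;=\; -\,h_{\varepsilon,p_{\mathrm{ref}}}(p)\bigl(1-h_{\varepsilon,p_{\mathrm{ref}}}(p)\bigr)\,\frac{1-2p}{2\beta\,[p(1-p)+\varepsilon]^{3/2}}.
\]
(If I wanted to re-derive this within the lemma itself, I would use the sigmoid representation $h_{\varepsilon,p_{\mathrm{ref}}}(p)=\sigma(z(p))$ with $z(p)=\logit(p_{\mathrm{ref}})+\Omega_{\varepsilon}(p)/\beta$ from the third item of Proposition~\ref{pro:propreties}, combine $\sigma'=\sigma(1-\sigma)$ with $\Omega_{\varepsilon}'(p) = -(1-2p)/\bigl(2[p(1-p)+\varepsilon]^{3/2}\bigr)$, and read off the claimed expression.)

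The core of the proof is then a one-line substitution: evaluating the displayed identity at $p=p^*$ and using the defining fixed-point relation $h_{\varepsilon,p_{\mathrm{ref}}}(p^*)=p^*$ (which lets us replace the factor $h_{\varepsilon,p_{\mathrm{ref}}}(p^*)(1-h_{\varepsilon,p_{\mathrm{ref}}}(p^*))$ by $p^*(1-p^*)$) immediately gives
\[
h'_{\varepsilon,p_{\mathrm{ref}}}(p^*) \;=\; -\,p^*(1-p^*)\,\frac{1-2p^*}{2\beta\,[p^*(1-p^*)+\varepsilon]^{3/2}} \;=\; p^*(1-p^*)\,\frac{2p^*-1}{2\beta\,[p^*(1-p^*)+\varepsilon]^{3/2}},
\]
where the second equality is just pulling the minus sign inside the numerator. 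This establishes both forms claimed in the lemma.

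There is no genuine obstacle here: the statement is a direct corollary of the derivative formula already proved in Proposition~\ref{pro:propreties} combined with the fixed-point equation. The only thing to be careful about is the sign bookkeeping when absorbing the leading minus into $(1-2p^*)$ to produce $(2p^*-1)$, which accounts for why the sign of $h'_{\varepsilon,p_{\mathrm{ref}}}(p^*)$ is determined by whether $p^*$ lies above or below $\tfrac12$; this in turn is what governs the local stability condition $|h'_{\varepsilon,p_{\mathrm{ref}}}(p^*)|<1$ invoked in Theorem~\ref{theo:convergence}.
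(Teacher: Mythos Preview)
Your proof is correct and matches the paper's approach: the lemma is presented in the paper without an explicit proof precisely because it is an immediate substitution of the fixed-point identity $h_{\varepsilon,p_{\mathrm{ref}}}(p^*)=p^*$ into the general derivative formula from Proposition~\ref{pro:propreties}, exactly as you do.
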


One condition for local convergence is therefore to have: 
$|h'_{\varepsilon,p_{\mathrm{ref}}}(p^*)| =  p^* (1-p^*) \frac{| 2p^*-1|}{2\beta\,[p^*(1-p^*)+\varepsilon]^{3/2}} <1 $
which is satisfied if : 
$\beta > \mathcal{B}(p^*)=p^* (1-p^*) \frac{| 2p^*-1|}{2[p^*(1-p^*)+\varepsilon]^{3/2}}.$

We see from Figure \ref{fig:lowerboundconvergence} in Appendix \ref{app:plots} the lower bound on \( \beta \) required to ensure local convergence of GRPO iterations to a fixed point \( p^* \). Figure \ref{fig:iterationConvergence} in Appendix  \ref{app:plots} shows iteration \eqref{eq:fixedIteration} as a function of \( n \) for different values of \( \beta \) and \( p_{\mathrm{ref}} \). We see that in most cases, there is a sharp transition where we observe fast convergence to \( 1 \) or to a fixed point \( p^* \). For \( \beta = 5 \) and \( p_{\mathrm{ref}} = 0.001 \), we see a divergent behavior.

\begin{remark} Note that the condition on $\beta$ is stated conditionally on a prompt $q$, to obtain results uniformly on $q$ we need to take $\sup$ on $q$ in all lower bounds.   
\label{rem:qdependency}
\end{remark}

\paragraph{Practical Implications.}  In practical implementations GRPO is applied successively in stages where $\pi_{\mathrm{ref}}$ is set to the last iteration from the GRPO training in each stage \citep{shao2024deepseekmath}. In light of our theory this ensures that we are amplifying the probability of success w.r.t the new $\pi_{\mathrm{ref}}$, coming the previous GRPO stage.

\begin{proof}[Proof of Theorem \ref{theo:convergence}] This is a direct application of local Banach fixed point theorem:

\begin{theorem}[Local Contraction Mapping for One-Dimensional Functions]
Let \( f : \mathbb{R} \to \mathbb{R} \) be continuously differentiable, and suppose that \( x^* \in \mathbb{R} \) is a fixed point of \( f \) (i.e., \( f(x^*) = x^* \)). Assume that \( f' \) is continuous and that
\[
\lvert f'(x^*) \rvert < 1.
\]
Then, by the continuity of \( f' \), there exists a radius \( r > 0 \) and a constant \( k \) with
\[
\lvert f'(x) \rvert \le k < 1 \quad \text{for all } x \in [x^*-r, \, x^*+r].
\]
Consequently, \( f \) is a contraction on the interval \( I = [x^* - r, x^* + r] \), and for any initial guess \( x_0 \in I \), the iteration defined by
\[
x_{n+1} = f(x_n)
\]
converges to the unique fixed point \( x^* \) in \( I \).
\end{theorem}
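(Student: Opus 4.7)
The plan is to prove the classical local attraction theorem (the one-dimensional Banach fixed point theorem) by using $C^1$ regularity to obtain a uniform derivative bound on a neighborhood of $x^*$, promoting that to both self-invariance and a contraction estimate via the mean value theorem, and then iterating to get geometric convergence and uniqueness.

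First I would pick any $k \in (|f'(x^*)|,\, 1)$, which exists because $|f'(x^*)| < 1$. Continuity of $f'$ at $x^*$ then furnishes an $r>0$ with $|f'(x)| \le k$ for every $x \in I := [x^*-r,\, x^*+r]$. The second step is to verify that $f$ sends $I$ into itself. For any $x \in I$, since $f(x^*) = x^*$, the mean value theorem gives some $\xi$ between $x$ and $x^*$ with
\[
|f(x) - x^*| \;=\; |f(x) - f(x^*)| \;=\; |f'(\xi)|\,|x - x^*| \;\le\; k r \;<\; r,
\]
so $f(x) \in I$. The same MVT estimate applied to arbitrary $x, y \in I$ yields $|f(x) - f(y)| \le k\,|x - y|$, i.e.\ $f$ is a strict contraction on $I$ with Lipschitz constant $k < 1$.

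With self-invariance and contraction in hand, I would conclude by a direct iteration rather than invoking a separate abstract theorem. For any $x_0 \in I$ the self-map property keeps $x_n := f^{\,n}(x_0) \in I$ for all $n$, and induction on the contraction inequality gives $|x_n - x^*| \le k\,|x_{n-1} - x^*| \le \cdots \le k^{n} |x_0 - x^*|$, which tends to $0$ since $k < 1$. Uniqueness of the fixed point inside $I$ is then immediate: any other fixed point $y^* \in I$ would satisfy $|x^* - y^*| = |f(x^*) - f(y^*)| \le k\,|x^* - y^*|$, which with $k<1$ forces $x^* = y^*$.

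The main obstacle is less about depth and more about careful bookkeeping at two junctures: one must pick $k$ strictly between $|f'(x^*)|$ and $1$ (so that the continuity-of-$f'$ step produces a uniform bound strictly below $1$), and one must invoke the strict inequality $k r < r$ so that forward iterates stay safely inside $I$ rather than drifting to its boundary. Beyond these small care points, the argument reduces to the mean value theorem and a geometric-series convergence estimate, with no need for machinery beyond continuity of $f'$ and completeness of the closed bounded interval $I$.
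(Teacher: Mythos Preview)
Your argument is correct and complete: the choice of $k$ strictly between $|f'(x^*)|$ and $1$, the use of continuity of $f'$ to obtain the uniform bound on a closed interval, the mean value theorem for both self-invariance $f(I)\subset I$ and the contraction estimate, and the geometric decay $|x_n-x^*|\le k^n|x_0-x^*|$ together with the uniqueness step are all airtight.

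As for comparison with the paper: the paper does not actually prove this statement. It is stated inside the proof of Theorem~\ref{theo:convergence} as a classical result (the local Banach fixed-point theorem) and is simply invoked without argument. So your write-up supplies a full proof where the paper gives none; the approach you take is precisely the standard one the paper is implicitly appealing to.
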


\end{proof}

\section{Mirror GRPO: Proof of Section \ref{Sec:MirrorGRPO} }\label{sec:generalapp}
\begin{theorem}[General Theorem with general weights and anchor policy]
\label{theo:general}
\[ \pi^*= \mathcal{P}(\nu, \pi_{\circ})=\argmax_{\pi} \mathbb{E}_{\pi(\cdot|q)} A_{\nu}(\cdot,q) - \beta \mathsf{KL} (\pi|| \pi_{\circ})\]
where 
\begin{equation}
\label{eq:grpo-weights}
A_{\nu}(q,o)=
\begin{cases}
+\omega^+(p_\nu), & r(q,o)=1,\\
-\omega^-(p_\nu), & r(q,o)=0,
\end{cases}
\end{equation}
where $p_{\nu}= \mathbb{P}_{\nu(\cdot|q)}(r(q,\cdot) =1) $. Let $\Omega(p)=\omega^+(p) + \omega^-(p). $
The following holds:
\begin{enumerate}
\item \[ \pi^* (o|q)=\frac{ \pi_{\circ} (o|q) \exp A_{\nu}(q,o)}{ p_{\pi_{\circ}}(q) \exp(\omega^+(p_\nu(q))) + (1- p_{\pi_{\circ}}(q)) \exp(- \omega^-(p_\nu(q)))}   \]
\item  Let $\pi^{\circ}_{n-1}(\cdot|q), n \geq 1$ a sequence of anchor probabilities, and $p^\circ_{n-1}(q)$ their corresponding PoS. Let $p_{n}=p_{\pi_{n}}$ where 
$\pi_{n}$  defined as follows : 

 \[ \pi_{n}(q) =  \mathcal{P}( \pi_{n-1}(q),  \pi^{\circ}_{n-1}(q) ), \]
we have: 

\[ \logit (p_{n} (q) ) = \logit \Big( p^{\circ}_{n-1} (q) \Big)+ \Omega (p_{n-1}(q) )  \]

and \[ p_{n}(q) = \sigma \Big( \logit \Big( p^{\circ}_{n-1} (q) \Big)+ \Omega (p_{n-1}(q) ) .\Big)\]

\end{enumerate} 
\end{theorem}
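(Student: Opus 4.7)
The plan is to exploit the binary structure of the reward $r(q,o)\in\{0,1\}$, which makes the advantage $A_\nu(q,o)$ a two-valued function on $\mathcal{O}$. This collapses both the partition function in Part~1 and the recursion in Part~2 into simple closed forms, so the whole argument is essentially two short calculations. (Note: the displayed formula in the statement uses $\exp A_\nu$ rather than $\exp(A_\nu/\beta)$; I will treat this as the convention that $\beta$ has been absorbed into $\omega^{\pm}$, i.e.\ set $\beta=1$ without loss of generality, since the earlier theorems of the paper carry $1/\beta$ through in identical algebra.)

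For Part~1, I would solve the KL-regularized variational problem $\max_\pi \mathbb{E}_{\pi(\cdot|q)}A_\nu(q,\cdot) - \mathsf{KL}(\pi\|\pi_\circ)$ by the standard rewriting
\[\mathbb{E}_{\pi}A_\nu - \mathsf{KL}(\pi\|\pi_\circ) = -\mathsf{KL}\!\left(\pi\,\Big\|\,\tfrac{1}{Z(q)}\pi_\circ e^{A_\nu}\right) + \log Z(q),\]
so that the unique maximizer is the Gibbs tilt $\pi^*(o|q) = \pi_\circ(o|q)\exp(A_\nu(q,o))/Z(q)$. Because $A_\nu(q,o)$ takes the value $\omega^+(p_\nu(q))$ on the success set $\{o:r(q,o)=1\}$ and $-\omega^-(p_\nu(q))$ on its complement, the normalizer splits as
\[Z(q) = p_{\pi_\circ}(q)\,e^{\omega^+(p_\nu(q))} + (1-p_{\pi_\circ}(q))\,e^{-\omega^-(p_\nu(q))},\]
which is exactly the denominator in the stated formula. (This reproduces the computations already used in the proofs of Theorem~\ref{theo:piexpression} and Theorem~\ref{theo:mirropolicy} with generic $\pi_\circ$ in place of $\pi_{\mathrm{ref}}$ or $\pi_{n-1}$.)

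For Part~2, I would apply Part~1 with anchor $\pi^\circ_{n-1}$ and reward-distribution $\pi_{n-1}$, and then read off the probability of success by integrating $\mathbbm{1}_{r(q,o)=1}$ against $\pi_n$. The two-valued structure of $A_\nu$ gives, with $p^\circ = p^\circ_{n-1}(q)$, $p = p_{n-1}(q)$,
\[p_n(q) = \frac{p^\circ\,e^{\omega^+(p)}}{Z(q)}, \qquad 1-p_n(q) = \frac{(1-p^\circ)\,e^{-\omega^-(p)}}{Z(q)}.\]
Forming the ratio cancels $Z(q)$, and taking logarithms yields
\[\logit(p_n(q)) = \logit(p^\circ_{n-1}(q)) + \omega^+(p_{n-1}(q)) + \omega^-(p_{n-1}(q)) = \logit(p^\circ_{n-1}(q)) + \Omega(p_{n-1}(q)),\]
from which the sigmoid form follows by applying $\sigma = \logit^{-1}$.

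There is no substantive analytical obstacle: strict concavity of the KL-regularized objective gives uniqueness in Part~1, and the binary reward then does all the work. The only points that need minor care are (i) well-definedness of the logit, which requires $0 < p^\circ_{n-1}(q) < 1$ (equivalently that $\pi^\circ_{n-1}$ puts mass on both success and failure sets), and (ii) keeping the $\beta$-convention consistent with the rest of the paper when specializing this general result to the mean+variance or mean-only choices of $\omega^\pm$.
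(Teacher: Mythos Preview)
Your proposal is correct and follows essentially the same route as the paper: Part~1 is the standard Gibbs-tilt solution of the KL-regularized objective (the paper simply points back to Theorem~\ref{theo:piexpression}), and Part~2 is obtained by integrating the resulting policy over the success set and simplifying to the sigmoid/logit form. Your presentation is slightly more explicit in writing out both $p_n$ and $1-p_n$ and cancelling $Z(q)$ via their ratio, whereas the paper divides through in the single expression for $p_*(q)$, but these are the same computation; your remark about the missing $1/\beta$ in the displayed formula (and absorbing it into $\omega^\pm$) is also well taken.
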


\begin{proof}
The proof of item 1 is the same as in Theorem \ref{theo:piexpression}. Turning to the second point we have by taking expectation on success events:
\begin{align*}
p_*(q) &= \frac{ p_{\pi_{\circ}}(q) \exp(w^+ (p_{\nu}(q)) )}{p_{\pi_{\circ}}(q) \exp(\omega^+(p_\nu(q))) + (1- p_{\pi_{\circ}}(q)) \exp(- \omega^-(p_\nu(q)))}\\
&= \frac{1}{1+ \exp (- \logit (p_{\pi_{\circ}}(q) ) - \omega^+(p_{\nu}(q)) - \omega^-(p_{\nu}(q)) )}\\
& = \sigmoid (\logit (p_{\pi_{\circ}}(q) ) + \Omega(p_{\nu}(q)))
\end{align*}
and hence using that sigmoid and logit are inverse we have:
\[ \logit(p_*) = \logit (p_{\pi_{\circ}}(q) ) + \Omega(p_{\nu}(q))\]

\end{proof}

\begin{proof}[Proof of Theorem \ref{theo:mirropolicy}] The theorem is immediate applying Theorem \ref{theo:general} with anchors $\pi_{n-1}$.

\end{proof}

\begin{proof}[Proof of Theorem \ref{theo:mirropolicy}]
\emph{(1) Monotonicity and no interior fixed points.} Let $L_{n} = \logit(p_{n})$.
For $p\in(0,1)$, $\Omega_\varepsilon(p)=1/\sqrt{p(1-p)+\varepsilon}>0$, so \eqref{eq:posmirror} implies $L_n>L_{n-1}$ and hence $p_n>p_{n-1}$. An interior fixed point would solve $L=L+\Omega_\varepsilon(p)/\beta$, impossible since the increment is strictly positive.

It is easy to verify that $p =0$ and $p=1$ are fixed points :
\[h_{\varepsilon,\beta}(0) = \sigmoid ( \logit(0) + \frac{1}{\beta \sqrt{\varepsilon}} ) = \sigmoid(-\infty)=0\]
\[ h_{\varepsilon,\beta}(1) = \sigmoid ( \logit(1) + \frac{1}{\beta \sqrt{\varepsilon}} ) = \sigmoid(+\infty)=1 \]

\emph{(2) Convergence and stability.}
(1) If $p_0= p_{\mathrm{ref}}\in(0,1)$, then  $(p_n)$ is strictly increasing and bounded by $1$, so $p_n\uparrow \bar p\le 1$,  and the limit point is $\bar{p}=1$ the fixed point. 
(2) the fixed point is unique and stable if $p_{\mathrm{ref}} \in(0,1)$.
(3) If $p_{\mathrm{ref}}=0$, $p_1=h_{\varepsilon,\beta}(0) = 0 $, and so on, zero is an absorbing fixed point.

\end{proof}

\section{GRPO with Two KL Regularizers: PoS Recursion, and Fixed-Point }
\label{sec:two-kl}

Consider the following iteration 
\begin{equation}
\label{eq:mirror-obj2KL}
\pi_{n}=\argmax_{\pi}\; \mathbb{E}_{q\sim \rho_{\mathcal{Q}}} \left(\mathbb{E}_{\pi(\cdot|q)}
 A_{n-1}(q,\cdot)-\;\beta\, \left( \alpha \mathsf{KL}\!\Big(\pi(\cdot\mid q)\,\Big\|\,\pi_{\mathrm{ref}}(\cdot\mid q)\Big)+ (1-\alpha ) \mathsf{KL}\!\Big(\pi(\cdot\mid q)\,\Big\|\,\pi_{n-1}(\cdot\mid q)\Big)\right) \right),
\end{equation}

\begin{lemma}[Geometric Mean]
\label{lem:two-kl-collapse}
For any distributions $\pi,\pi_{\mathrm{ref}},\pi^{\circ}$ let $\alpha>0$

\[
\alpha \mathsf{KL}(\pi\|\pi_{\mathrm{ref}})+ (1-\alpha)\mathsf{KL}(\pi\|\pi^{\circ})
= \mathsf{KL}(\pi\|\bar\pi^{(\alpha)}) + C(\pi_{\mathrm{ref}},\pi^{\circ}),
\]
where $\bar\pi^{(\alpha)}\propto \pi_{\mathrm{ref}}^{\alpha}\pi^{\circ (1-\alpha)}$ and $C$ is constant in $\pi$.
\label{ref:geometricmean}
\end{lemma}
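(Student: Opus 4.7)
The plan is to prove the identity by direct manipulation of the log-density inside the expectation defining the KL terms, showing that the $\alpha$-weighted geometric mean arises naturally as the common ``denominator'' once one combines $\log\pi_{\mathrm{ref}}$ and $\log\pi^{\circ}$ with the appropriate weights. Throughout, write all KLs as expectations under $\pi$ conditional on $q$, and treat the dependence on $q$ as implicit; the proof is pointwise in $q$.

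The first step is to expand the LHS as
\[
\alpha\,\mathbb{E}_{\pi}\log\frac{\pi}{\pi_{\mathrm{ref}}}
+(1-\alpha)\,\mathbb{E}_{\pi}\log\frac{\pi}{\pi^{\circ}}
=\mathbb{E}_{\pi}\!\left[\log\pi-\alpha\log\pi_{\mathrm{ref}}-(1-\alpha)\log\pi^{\circ}\right],
\]
using that $\alpha\log\pi+(1-\alpha)\log\pi=\log\pi$. The next step is to recognize the combination $\alpha\log\pi_{\mathrm{ref}}+(1-\alpha)\log\pi^{\circ}=\log\!\bigl(\pi_{\mathrm{ref}}^{\alpha}\pi^{\circ\,(1-\alpha)}\bigr)$, i.e., the log of the unnormalized geometric-mean density. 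Introduce the normalizer $Z(\pi_{\mathrm{ref}},\pi^{\circ})=\int \pi_{\mathrm{ref}}^{\alpha}\pi^{\circ\,(1-\alpha)}\,do$ (finite by Hölder's inequality applied to $\pi_{\mathrm{ref}},\pi^{\circ}\in L^1$, so that $Z\le 1$), and set $\bar\pi^{(\alpha)}=\pi_{\mathrm{ref}}^{\alpha}\pi^{\circ\,(1-\alpha)}/Z$, which is the probability distribution in the statement.

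Substituting back yields
\[
\mathbb{E}_{\pi}\!\left[\log\pi-\log\!\bigl(\pi_{\mathrm{ref}}^{\alpha}\pi^{\circ\,(1-\alpha)}\bigr)\right]
=\mathbb{E}_{\pi}\log\frac{\pi}{\bar\pi^{(\alpha)}}-\log Z(\pi_{\mathrm{ref}},\pi^{\circ})
=\mathsf{KL}\!\bigl(\pi\,\big\|\,\bar\pi^{(\alpha)}\bigr)+C(\pi_{\mathrm{ref}},\pi^{\circ}),
\]
with $C(\pi_{\mathrm{ref}},\pi^{\circ})=-\log Z(\pi_{\mathrm{ref}},\pi^{\circ})$, which depends only on $\pi_{\mathrm{ref}}$ and $\pi^{\circ}$ and not on $\pi$. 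This is exactly the claimed decomposition; note that $C\ge 0$ since $Z\le 1$, with equality iff $\pi_{\mathrm{ref}}=\pi^{\circ}$ almost everywhere.

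The only subtle point, and the step I would flag as the main obstacle if any, is well-definedness: one needs $Z<\infty$ and absolute continuity conditions so that the logarithms are finite, which is harmless here because both $\pi_{\mathrm{ref}}$ and $\pi^{\circ}$ are probability densities on the same space (so $Z\le 1$ by Hölder), and because the KL on the LHS is already assumed finite so $\pi$ is absolutely continuous with respect to both, hence w.r.t.\ their geometric mean as well. Once this is checked, the identity is purely algebraic. The value of $C$ can equivalently be written as the negative log of the Hellinger--type affinity $\int \pi_{\mathrm{ref}}^{\alpha}\pi^{\circ\,(1-\alpha)}$, which is a Rényi-divergence quantity between $\pi_{\mathrm{ref}}$ and $\pi^{\circ}$; this foreshadows the ``Rényi correction'' term $\Delta_R(q)$ that appears in the two-KL PoS recursion in the paper.
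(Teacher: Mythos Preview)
Your proof is correct. The paper itself does not give an argument for this lemma; its entire proof consists of a citation to an external reference, so your self-contained derivation by direct expansion of the KL integrands is strictly more informative than what appears in the paper and is the standard way to establish this identity. One small remark: your invocation of H\"older's inequality to conclude $Z\le 1$ (and hence $C\ge 0$) implicitly requires $\alpha\in(0,1)$, whereas the lemma is stated for $\alpha>0$; this side observation is not needed for the identity itself, which is purely algebraic, and in any case the paper only ever uses $\alpha\in(0,1)$.
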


\begin{proof}
See for example \citep{aminian2025theoreticalanalysisklregularizedrlhf}.
\end{proof}

By Lemma \ref{ref:geometricmean}, we can rewrite GRPO objective with two KL regularization to previous iteration and to the reference as a single KL regularizer to their geometric mean as follows:

 \begin{equation}
\label{eq:mirror-obj}
\pi_{n}=\argmax_{\pi}\; \mathbb{E}_{q\sim \rho_{\mathcal{Q}}} \left(\mathbb{E}_{\pi(\cdot|q)}
 A_{n-1}(q,\cdot)-\;\beta\, \mathsf{KL} (\pi || \tilde{\pi}^{(\alpha)}_{n-1}) \right),
\end{equation}
where 
\[ \tilde{\pi}^{(\alpha)}_{n-1} \propto \pi_{\mathrm{ref}}^{\alpha}\pi^{ (1-\alpha)}_{n-1}\]

To apply Theorem \ref{theo:general} we need to have an expression of the PoS under  the anchor $\tilde{\pi}^{(\alpha)}_{n-1}$, as function of $p_{\mathrm{ref}}$ and $p_{n-1}$ so we obtain a recurrence in $p_n$. 

Define the following success and failure conditional probabilities:
 \[
p_{\mathrm{ref},S}(o|q):=\frac{\pi_{\mathrm{ref}}(o\mid q)\,\mathbf 1_{\{r(q,o)=1\}}}{p_{\mathrm{ref}}(q)},\quad
p_{n-1,S}(o|q):=\frac{\pi_{n-1}(o\mid q)\,\mathbf 1_{\{r(q,o)=0\}}}{p_{n-1}(q)},\qquad \]
and 
\[
p_{\mathrm{ref},F}(o|q):=\frac{\pi_{\mathrm{ref}}(o\mid q)\,\mathbf 1{_\{r(q,o)=0\}}}{1-p_{\mathrm{ref}}(q)},\quad
p_{n-1,F}(o|q):=\frac{\pi_{n-1}(o\mid q)\,\mathbf 1_{\{r(q,o=0)\}}}{1-p_{n-1}(q)}.
\]

and let \[D_{\alpha} (P || Q) = \frac{1}{\alpha-1} \log \int p^\alpha q^{(1-\alpha)},\]

be the Rényi divergence of order $\alpha in (0,1)$.
\begin{lemma} [PoS geometric mean] The probability of success of the geometric mean $\tilde{\pi}^{(\alpha)}_{n-1}$ satisfies: 

\[ \logit  \tilde{p}^{(\alpha)}_{n-1} = \alpha \logit(p_{\mathrm{ref}}(q)) +(1-\alpha) \logit (p_{n-1}(q)) + ( \alpha-1 ) \left(  D_{\alpha}( p_{\mathrm{ref},S}|| p_{n-1,S})  - D_{\alpha}( p_{\mathrm{ref},F}|| p_{n-1,F}  )\right)    \]
\label{lem:posgeom}
\end{lemma}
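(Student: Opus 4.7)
The plan is to compute $\logit \tilde{p}^{(\alpha)}_{n-1}(q)$ directly from the explicit density of the geometric-mean anchor and identify the resulting partial sums as Rényi-divergence exponentials. Fixing a prompt $q$ I would write
\[
\tilde{\pi}^{(\alpha)}_{n-1}(o\mid q) \;=\; \frac{\pi_{\mathrm{ref}}(o\mid q)^{\alpha}\,\pi_{n-1}(o\mid q)^{1-\alpha}}{Z(q)},\qquad Z(q)=\sum_{o}\pi_{\mathrm{ref}}(o\mid q)^{\alpha}\pi_{n-1}(o\mid q)^{1-\alpha},
\]
and split the normalizer along the success/failure partition $Z(q)=H_S(q)+H_F(q)$, where $H_S$ (resp.\ $H_F$) sums the same integrand over $\{o:r(q,o)=1\}$ (resp.\ $\{o:r(q,o)=0\}$). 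Then $\tilde{p}^{(\alpha)}_{n-1}(q)=H_S/Z$ and $1-\tilde{p}^{(\alpha)}_{n-1}(q)=H_F/Z$, so $Z$ cancels and $\logit \tilde{p}^{(\alpha)}_{n-1}(q)=\log(H_S/H_F)$. The problem therefore reduces to factoring each of $H_S,H_F$.

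The key algebraic step is a bilinear factorization. On the success set I substitute $\pi_{\mathrm{ref}}(o\mid q)\mathbbm{1}_S=p_{\mathrm{ref}}(q)\,p_{\mathrm{ref},S}(o\mid q)$ and $\pi_{n-1}(o\mid q)\mathbbm{1}_S=p_{n-1}(q)\,p_{n-1,S}(o\mid q)$, which pulls the scalar masses out of the sum:
\[
H_S(q)=p_{\mathrm{ref}}(q)^{\alpha}\,p_{n-1}(q)^{1-\alpha}\sum_{o}p_{\mathrm{ref},S}(o\mid q)^{\alpha}p_{n-1,S}(o\mid q)^{1-\alpha},
\]
and the analogous identity holds for $H_F$ with the failure conditionals and masses $1-p_{\mathrm{ref}}(q),1-p_{n-1}(q)$. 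By the very definition of the Rényi divergence of order $\alpha\in(0,1)$, the inner sums equal $\exp\!\bigl((\alpha-1)\,D_\alpha(p_{\mathrm{ref},S}\|p_{n-1,S})\bigr)$ and $\exp\!\bigl((\alpha-1)\,D_\alpha(p_{\mathrm{ref},F}\|p_{n-1,F})\bigr)$ respectively.

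Taking the log of $H_S/H_F$ and collecting the scalar factors gives $\alpha\log\!\frac{p_{\mathrm{ref}}}{1-p_{\mathrm{ref}}}+(1-\alpha)\log\!\frac{p_{n-1}}{1-p_{n-1}}+(\alpha-1)\bigl[D_\alpha(p_{\mathrm{ref},S}\|p_{n-1,S})-D_\alpha(p_{\mathrm{ref},F}\|p_{n-1,F})\bigr]$, which is the claimed identity. I do not anticipate any real obstacle: the only mild care needed is the non-degeneracy of $p_{\mathrm{ref}}(q),p_{n-1}(q)\in(0,1)$ (so that the two success/failure conditionals and the two logits are well-defined) and the absolute-continuity condition on $S$ and $F$ ensuring finiteness of the Rényi divergences. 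Both assumptions are consistent with the setting used elsewhere in the paper, where the PoS sequence is assumed strictly interior.
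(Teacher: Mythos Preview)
Your proposal is correct and follows essentially the same approach as the paper: split the geometric-mean normalizer over the success/failure partition, factor out the scalar masses $p_{\mathrm{ref}}^\alpha p_{n-1}^{1-\alpha}$ (and their complements), identify the remaining inner sums as $\exp\bigl((\alpha-1)D_\alpha(\cdot\|\cdot)\bigr)$, and take the log of the ratio. The only cosmetic difference is that the paper writes $\tilde p^{(\alpha)}_{n-1}=1/(1+w_F/w_S)$ and works with $\log(w_F/w_S)$ before passing through $\sigma$, whereas you go straight to $\logit \tilde p^{(\alpha)}_{n-1}=\log(H_S/H_F)$; these are trivially equivalent.
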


\begin{proof}
Let $w_{S}= \int \mathbbm{1}_{r(q,o)=1} \tilde{\pi}^{(\alpha)}_{n-1}$ and $w_{F}= \int \mathbbm{1}_{r(q,o)=0} \tilde{\pi}^{(\alpha)}_{n-1}$.

\begin{align}
 \tilde{p}^{(\alpha)}_{n-1} & = \frac{\int \mathbbm{1}_{r(q,o)=1} \tilde{\pi}^{(\alpha)}_{n-1}}{ \int \mathbbm{1}_{r(q,o)=1} \tilde{\pi}^{(\alpha)}_{n-1} +  \int \mathbbm{1}_{r(q,o)=0} \tilde{\pi}^{(\alpha)}_{n-1} }\\
 & = \frac{1}{1+ \frac{w_{F}}{w_{S}}}
 \end{align}
 
 \begin{equation}
 \frac{w_{F}}{w_{S}}= \frac{\int \mathbbm{1}_{r(q,o)=0} \pi^{\alpha}_{\mathrm{ref}}\pi_{n-1}^{1-\alpha} }{\int \mathbbm{1}_{r(q,o)=1} \pi^{\alpha}_{\mathrm{ref}}\pi_{n-1}^{1-\alpha} } . 
 \end{equation}

It is easy to see that : 

\begin{equation}
 \frac{w_{F}}{w_{S}} = \frac{(1-p_{\mathrm{ref}(q)}^{\alpha}) (1-p_{n-1}(q))^{(1-\alpha)}\int p^{\alpha}_{\mathrm{ref}, F} (o|q)p_{n-1,F}^{1-\alpha}(o|q)}{(p_{\mathrm{ref}(q)}^{\alpha}) (p_{n-1}(q))^{(1-\alpha)}\int p^{\alpha}_{\mathrm{ref},S} (o|q)p_{n-1,S}^{1-\alpha}(o|q)}
\end{equation}

Taking log on both sides we have: 
\begin{align*}
\log \frac{w_{F}}{w_{S}} &= \log  \frac{(1-p_{\mathrm{ref}(q)})^{\alpha}}{p_{\mathrm{ref}}^{\alpha}(q)} + \log \left( \frac{(1-p_{n-1}(q))^{(1-\alpha)}}{p_{n-1}^{1-\alpha}(q)}\right) + \log  \int p^{\alpha}_{\mathrm{ref},F} (o|q)p_{n-1,F}^{1-\alpha}(o|q) - \int p^{\alpha}_{\mathrm{ref},S} (o|q)p_{n-1,S}^{1-\alpha}(o|q)\\
&=- \alpha \logit(p_{\mathrm{ref}}(q)) -(1-\alpha) \logit (p_{n-1}(q)) + (\alpha-1) \left( D_{\alpha}( p_{\mathrm{ref},F}|| p_{n-1,F}  ) - D_{\alpha}( p_{\mathrm{ref},S}|| p_{n-1,S}  )  \right),
\end{align*}
where 
\[D_{\alpha} (P || Q) = \frac{1}{\alpha-1} \log \int p^\alpha q^{(1-\alpha)},\]
is the Rényi divergence.

Finally we obtain:

\begin{align*}
 \tilde{p}^{(\alpha)}_{n-1} &= \frac{1}{1+ \exp(- \alpha \logit(p_{\mathrm{ref}}(q)) -(1-\alpha) \logit (p_{n-1}(q)) - ( \alpha-1 ) \left(  D_{\alpha}( p_{\mathrm{ref},S}|| p_{n-1,S})  - D_{\alpha}( p_{\mathrm{ref},F}|| p_{n-1,F}  )    \right))}\\
 &= \sigma( \alpha \logit(p_{\mathrm{ref}}(q)) +(1-\alpha) \logit (p_{n-1}(q)) + ( \alpha-1 ) \left(  D_{\alpha}( p_{\mathrm{ref},S}|| p_{n-1,S})  - D_{\alpha}( p_{\mathrm{ref},F}|| p_{n-1,F}  )    \right))
\end{align*}

This gives us finally: 

\[ \logit  \tilde{p}^{(\alpha)}_{n-1} = \alpha \logit(p_{\mathrm{ref}}(q)) +(1-\alpha) \logit (p_{n-1}(q)) + ( \alpha-1 ) \left(  D_{\alpha}( p_{\mathrm{ref},S}|| p_{n-1,S})  - D_{\alpha}( p_{\mathrm{ref},F}|| p_{n-1,F}  )    \right). \]

\end{proof}

\begin{theorem}[PoS recurrence for 2 KL regularizers] Fix $\alpha \in (0,1), \beta>0$.The probability of success for the iteration of GRPO with 2 KL regularizer given in \eqref{eq:mirror-obj2KL} satisfies the following recurrence: 
\begin{eqnarray}
\logit p_{n}(q) &= \alpha \logit(p_{\mathrm{ref}}(q)) +(1-\alpha) \logit (p_{n-1}(q)) + \underbrace{ ( 1-\alpha ) \left(  D_{\alpha}( p_{\mathrm{ref},F}|| p_{n-1,F})  - D_{\alpha}( p_{\mathrm{ref},S}|| p_{n-1,S}  )    \right)}_{\Delta_{R} \text{Rényi Correction}} \nonumber\\
&+ \frac{\Omega_{\varepsilon}(p_{n-1})(q)}{\beta} .
\label{eq:2KLRenyi}
\end{eqnarray}

\end{theorem}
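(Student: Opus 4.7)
The plan is to combine the three building blocks already assembled in this section. First I would apply the geometric mean identity (Lemma \ref{ref:geometricmean}) to collapse the convex combination of KL penalties into a single KL anchored at $\tilde\pi^{(\alpha)}_{n-1}\propto \pi_{\mathrm{ref}}^{\alpha}\pi_{n-1}^{1-\alpha}$, up to an additive constant that is independent of $\pi$ and hence irrelevant for the $\argmax$. This rewrites \eqref{eq:mirror-obj2KL} in the canonical single-anchor form needed by the general theorem.

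Next I would invoke Theorem \ref{theo:general} with the anchor $\pi^{\circ}_{n-1}=\tilde\pi^{(\alpha)}_{n-1}$ and the source of the advantage weights $\nu=\pi_{n-1}$ (since the calibrated advantage $A_{n-1}$ in \eqref{eq:mirror-obj2KL} depends on the PoS of the previous iterate). The second item of that theorem, applied with the mean+variance weights $\omega^{\pm}_{\varepsilon}$ so that $\Omega=\Omega_{\varepsilon}$, yields the intermediate recurrence
\begin{equation*}
\logit p_{n}(q)\;=\;\logit \tilde p^{(\alpha)}_{n-1}(q)\;+\;\frac{\Omega_{\varepsilon}(p_{n-1}(q))}{\beta},
\end{equation*}
where the $1/\beta$ appears because the KL regularization strength in the present objective is $\beta$ rather than $1$.

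Finally I would substitute the closed form for $\logit \tilde p^{(\alpha)}_{n-1}(q)$ provided by Lemma \ref{lem:posgeom}, namely
\begin{equation*}
\logit \tilde p^{(\alpha)}_{n-1}(q)\;=\;\alpha\,\logit p_{\mathrm{ref}}(q)+(1-\alpha)\,\logit p_{n-1}(q)+(\alpha-1)\bigl[D_{\alpha}(p_{\mathrm{ref},S}\|p_{n-1,S})-D_{\alpha}(p_{\mathrm{ref},F}\|p_{n-1,F})\bigr],
\end{equation*}
and rewrite $(\alpha-1)[\cdot]=(1-\alpha)[D_{\alpha}(p_{\mathrm{ref},F}\|p_{n-1,F})-D_{\alpha}(p_{\mathrm{ref},S}\|p_{n-1,S})]$ to match the sign convention in the theorem statement. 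Plugging this into the intermediate recurrence above gives exactly \eqref{eq:2KLRenyi}.

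The main obstacle is bookkeeping rather than a genuine mathematical difficulty: one must be careful that the advantage weights in the objective still depend on $p_{n-1}$ (not on the PoS of the geometric mean anchor), so the $\Omega_{\varepsilon}$ term in the recurrence is evaluated at $p_{n-1}(q)$; and that the sign flip between $(\alpha-1)$ and $(1-\alpha)$ swaps the roles of the success-conditional and failure-conditional Rényi divergences. Once these conventions are pinned down, the three steps compose to give the stated identity; no further estimates are required.
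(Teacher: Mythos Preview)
Your proposal is correct and follows exactly the paper's approach: collapse the two KL penalties to a single KL at the geometric-mean anchor via Lemma~\ref{lem:two-kl-collapse}, apply Theorem~\ref{theo:general} with that anchor to obtain $\logit p_n=\logit \tilde p^{(\alpha)}_{n-1}+\Omega_\varepsilon(p_{n-1})/\beta$, and then substitute the anchor PoS from Lemma~\ref{lem:posgeom}. Your care about the $(\alpha-1)\leftrightarrow(1-\alpha)$ sign swap and about evaluating $\Omega_\varepsilon$ at $p_{n-1}$ (not at the anchor's PoS) is well placed and matches the paper's conventions.
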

\begin{proof}
The proof is direct consequence of theorem \ref{theo:general} with geometric mean anchor (as showed in Lemma \ref{lem:two-kl-collapse}). We replace  in theorem \ref{theo:general} the anchor PoS by its expression computed in lemma \ref{lem:posgeom}.
\end{proof}

Let $L_{n}(q)=\logit p_{n}(q) $ and $L_{\mathrm{ref}}(q)=\logit(p_{\mathrm{ref}}(q))  $, hence we have the following recursion:

 \[ L_{n}(q)- L_{\mathrm{ref}}(q) = (1-\alpha) (L_{n-1}(q) -L_{\mathrm{ref}}(q)) + (1-\alpha )(  D_{\alpha}( p_{\mathrm{ref},F}|| p_{n-1,F})  - D_{\alpha}( p_{\mathrm{ref},S}|| p_{n-1,S}  )   + \Omega_{\varepsilon}(p_{n}) \]
 
 Let us assume that : 
 
\[ D_{\alpha}( p_{\mathrm{ref},S}|| p_{n-1,S}  ) \leq  D_{\alpha}( p_{\mathrm{ref},F}|| p_{n-1,F}),\]

i.e conditional successes between reference and previous policy are closer than the failures than we have since $\Omega_{\varepsilon}>0$: 

\[ L_{n}(q)- L_{\mathrm{ref}}(q) \geq (1-\alpha) (L_{n-1}(q) -L_{\mathrm{ref}} (q)) \geq (1-\alpha)^n (L_0-L_{\mathrm{ref}})=0 \]
and we obtain that we amplify probability w.r.t to $p_{\mathrm{ref}}$.

\section{Back to Parametric GRPO Iterations}\label{sec:approx}

Let $\tilde{\pi}_n = \pi_{\theta_n}$, the sequence of parametric policies solutions of problem \eqref{eq:grpoparametric} produced by gradient descent for example as in Algorithm \ref{alg:iter-grpo}.  We make the following assumption on the total variation distance $\mathrm{TV}$ between  these parametric policies and the non-parametric GRPO policies $\pi_{n}$ given in Theorem \ref{theo:piexpression}. We show in this Section if we have approximate policies we can have still asymptotic convergence.

\begin{assumption} We assume $\tilde{\pi}_0= \pi_0 = \pi_{\mathrm{ref}}$ and assume for all $n\geq 1$, there exists $\delta_{n}\geq 0$ such that:
\[\mathrm{TV} (\tilde{\pi}_n  || \pi_n ) \leq \mathrm{TV} (\tilde{\pi}_{n-1}  || \pi_{n-1} ) + \delta_n,\]
such  that there exists $\delta^* \in [0,1)$ such that $\sum_{i=1}^n \delta_i \to \delta^* $ as $n \to \infty$. 
\label{ass:conv}
\end{assumption}

We have the following theorem:

\begin{theorem} Under Assumption \ref{ass:conv} and assuming that $p_{n}$ converges to $p^*$ the fixed point of  $h_{\varepsilon, p_{\mathrm{ref}}}$. Let $\tilde{p}_n$ the probability of success of the policy $\tilde{\pi}$ we have: 
\[\lim_{n\to \infty} | \tilde{p}_n -p^*|\leq 2 \delta^*.\]
In the case $\delta^*=0$, we have convergence to the fixed point.
\label{theo:convparametric}
\end{theorem}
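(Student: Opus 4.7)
The plan is to bound $|\tilde p_n-p^*|$ by a triangle inequality that separates two sources of error: the mismatch between the parametric iterate $\tilde\pi_n$ and the exact nonparametric iterate $\pi_n$, and the convergence of the exact PoS sequence $p_n$ to its fixed point $p^*$. Concretely, I would write
\[
|\tilde p_n-p^*|\;\le\;|\tilde p_n-p_n|\;+\;|p_n-p^*|,
\]
and control the two pieces independently.

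For the first piece I would use the fact that both $\tilde p_n$ and $p_n$ are expectations of the \emph{same} $[0,1]$-valued test function $\mathbbm{1}_{r(q,o)=1}$ under the two policies $\tilde\pi_n(\cdot\mid q)$ and $\pi_n(\cdot\mid q)$. The standard variational characterization of total variation (either directly for indicators, or via a layer-cake decomposition for general $[0,1]$-valued functions) then yields
\[
|\tilde p_n - p_n|\;\le\;\mathrm{TV}(\tilde\pi_n,\pi_n),
\]
pointwise in $q$. Starting from the initial condition $\tilde\pi_0=\pi_0=\pi_{\mathrm{ref}}$, so that $\mathrm{TV}(\tilde\pi_0,\pi_0)=0$, a straightforward induction on Assumption \ref{ass:conv} telescopes to
\[
\mathrm{TV}(\tilde\pi_n,\pi_n)\;\le\;\sum_{i=1}^{n}\delta_i,
\]
and the hypothesis that the partial sums converge to $\delta^*$ gives $\limsup_n |\tilde p_n-p_n|\le \delta^*$.

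For the second piece, the hypothesis of the theorem is exactly that $p_n\to p^*$, so $|p_n-p^*|\to 0$. Taking $\limsup_{n\to\infty}$ in the triangle inequality and combining the two bounds produces
\[
\limsup_{n\to\infty}|\tilde p_n-p^*|\;\le\;\delta^*+0\;\le\;2\delta^*,
\]
where the slack factor of $2$ simply accommodates the possible $1/2$ that appears under the alternative normalization $\mathrm{TV}(\mu,\nu)=\tfrac{1}{2}\int|d\mu-d\nu|$. The case $\delta^*=0$ is then immediate, since both error contributions vanish.

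Given that the recursion in Assumption \ref{ass:conv} is additive and the initial gap is zero, the proof itself is essentially mechanical; the only conceptual subtlety (which I would flag but not attempt to resolve here) is that Assumption \ref{ass:conv} is the substantive hypothesis: it bakes in that the inner optimization in Algorithm \ref{alg:iter-grpo} (a finite number $\mu$ of gradient steps on importance-weighted Monte Carlo estimates with a finite group size $G$) introduces per-step TV errors $\delta_n$ whose partial sums remain bounded. Verifying this in a parametric hypothesis class, as opposed to merely assuming it, would be the genuinely hard step and is outside the scope of the statement being proved.
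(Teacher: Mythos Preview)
Your proof is correct and follows essentially the same approach as the paper: bound $|\tilde p_n-p_n|$ by the total variation distance between $\tilde\pi_n$ and $\pi_n$, telescope Assumption~\ref{ass:conv} from $\mathrm{TV}(\tilde\pi_0,\pi_0)=0$ to $\sum_{i=1}^n\delta_i$, and pass to the limit using $p_n\to p^*$. In fact your bound $|\tilde p_n-p_n|\le \mathrm{TV}(\tilde\pi_n,\pi_n)$ for the indicator test function is tighter than the paper's $|\tilde p_n-p_n|\le 2\,\mathrm{TV}(\tilde\pi_n,\pi_n)$ (which comes from the variational characterization over all $\|f\|_\infty\le 1$), so you actually obtain $\limsup_n|\tilde p_n-p^*|\le \delta^*$ before weakening to the stated $2\delta^*$; your remark about the normalization is the right diagnosis of where the paper's extra factor of $2$ enters.
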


In Assumption \ref{ass:conv} $\delta_n$ represent statistical, approximation and optimization errors. We see from Theorem \ref{theo:convparametric}, that  as long these error remain small, the probability of success of GRPO  parametric policy (estimated from samples and optimized for instance with gradient descent)  remains close to the fixed point probability success $p^*$.

\begin{proof}[Proof of Theorem \ref{theo:convparametric}]

Note that \[ \mathrm{TV} (\tilde{\pi}  || \pi) = \frac{1}{2} \sup_{||f||_{\infty}}  \mathbb{E}_{\tilde{\pi}} f -  \mathbb{E}_{{\pi}} f   \]
We have: 
\begin{align*}
|\tilde{p}_n - p_n | & = \Big | \mathbb{E}_{\tilde{\pi}_n} \mathbbm{1}_{r(q,o)=1} -  \mathbb{E}_{\pi_n} \mathbbm{1}_{r(q,o)=1} \Big|\\
&\leq 2 \mathrm{TV}(\tilde{\pi}_n || \pi_{n})\\
& \leq  2 \sum_{i=1}^n \delta_i + \mathrm{TV}(\tilde{\pi}_0, \pi_0)\\
&=  2 \sum_{i=1}^n \delta_i .
\end{align*}

Assume the sequence $p_n$ converges to $p^*$ the fixed point of $h_{\varepsilon, p_{\mathrm{ref}}}$.
Under Assumption \ref{ass:conv} we have :
\[ \lim_{n\to \infty} |\tilde{p}_n - p_n | \leq  2 \lim_{n\to \infty} \sum_{i=1}^n \delta_i =2\delta^*\]
\end{proof}

\section{Plots} \label{app:plots}
\begin{figure}[ht!]  
    \centering  
    \includegraphics[width=0.6\textwidth]{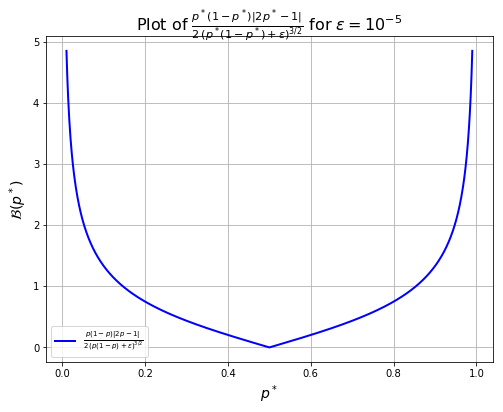}  % Adjust the width and file name as needed
    \caption{Lower bound on $\beta$ to ensure local convergence of GRPO fixed point iteration.}  % Add a caption for the image
    \label{fig:lowerboundconvergence}  % Label for referencing the image
\end{figure}

\begin{figure}[!ht]  % 'h!' means place the figure here if possible
    \centering  % Centers the image on the page
    \includegraphics[width=0.85\textwidth]{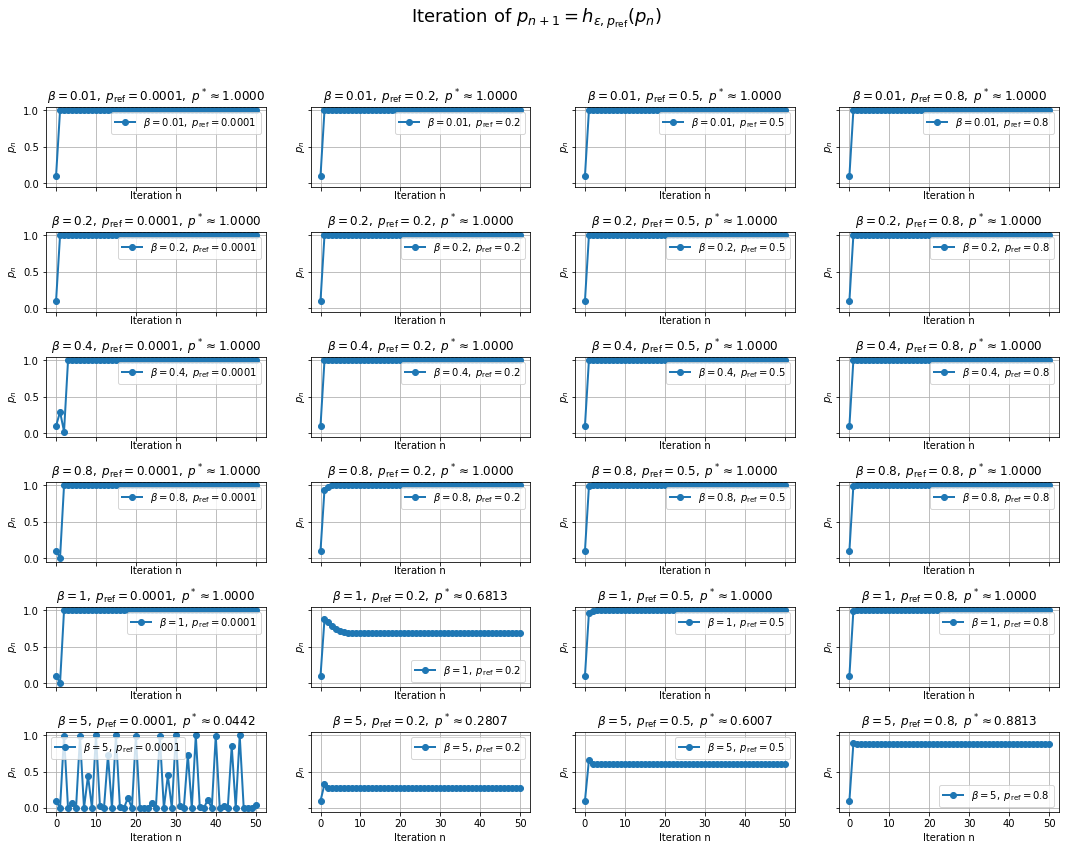}  % Adjust the width and file name as needed
    \caption{ GRPO Recursion and convergence to fixed points of $h_{\varepsilon}$, for $\varepsilon = 1e^{-5}$}  % Add a caption for the image
    \label{fig:iterationConvergence}  % Label for referencing the image
\end{figure}

\section{Assets}\label{app:assets}

\paragraph{Hardware setup} All our experiments were run on one compute node with Dual 48-core Intel Xeon 8468, 2TB of RAM, 8 NVIDIA HGX H100 80GB SMX5, 8x 3.4TB Enterprise NVMe U.2 Gen4, and 10x NVIDIA Mellanox Infiniband Single port NDR adapters, running RedHat Enterprise Linux 9.5

\paragraph{GRPO Config Setup} We use  the group size $G=16$ and per-device batch size $16$ meaning each on each GPU a single prompt $x$ with $16$ corresponding responses is processed. To increase the overall batchsize we use gradient accumulation of $4$, ending with an effective batch size of prompts of $28$. 
The context length used for this experiment is $200$, and the sampling temperature is  set  to $\tau=0.1$.

\paragraph{Libraries} Our experiments rely on the open-source libraries \href{https://pytorch.org/}{\texttt{pytorch}} \citep{Paszke2019} (license: BSD), \href{https://github.com/huggingface/transformers}{\texttt{HuggingFace Transformers}} \citep{Wolf2020} (Apache 2.0 license), and \href{https://github.com/huggingface/trl}{\texttt{HuggingFace TRL}} \citep{Vonwerra2022} (Apache 2.0 license).% We also relied on Open-R1 \citep{Openr12025} as well as light-eval  \citep{lighteval} for the evaluation of Aime24 and Math500.
\end{document}